\documentclass[11pt, a4paper, onecolumn, copyright]{AweAI}

\usepackage[authoryear, sort&compress, round]{natbib}
\usepackage[utf8]{inputenc}

\usepackage[utf8]{inputenc}
\usepackage[T1]{fontenc}
\usepackage{hyperref}
\usepackage{url}
\usepackage{booktabs}
\usepackage{amsfonts}
\usepackage{nicefrac}
\usepackage{microtype}
\usepackage{xcolor}
\usepackage{amsmath,amssymb,amsthm}
\usepackage{multirow}
\usepackage{graphicx}
\usepackage{subcaption}
\usepackage{makecell}
\usepackage{listings}
\usepackage{algorithm}
\usepackage{algpseudocode}
\usepackage{tikz}
\usepackage{marvosym}
\usetikzlibrary{arrows.meta,positioning,fit,backgrounds}
\usepackage{pgfplots}
\usepackage{enumitem}
\usepgfplotslibrary{groupplots}
\pgfplotsset{compat=1.18}

\newtheorem{theorem}{Theorem}
\newtheorem{proposition}[theorem]{Proposition}

\theoremstyle{definition}

\newtheorem{assumption}{Assumption}
\theoremstyle{remark}

\newcommand{\methodname}{TacoMAS}           

\newcommand{\G}{\mathcal{G}}
\newcommand{\V}{\mathcal{V}}
\newcommand{\E}{\mathcal{E}}
\newcommand{\acc}{\text{Acc}}
\newcommand{\calls}{\text{Calls}}

\definecolor{promptkey}{rgb}{0.10,0.30,0.60}
\definecolor{promptstr}{rgb}{0.55,0.30,0.10}
\lstdefinelanguage{prompt}{
  morekeywords={System,User,Assistant,Role,Goal,Constraint,Action,Observation,Reflection,Plan},
  morecomment=[l]{\#},
  morestring=[b]",
  sensitive=true,
}
\title{\includegraphics[height=1.2em]{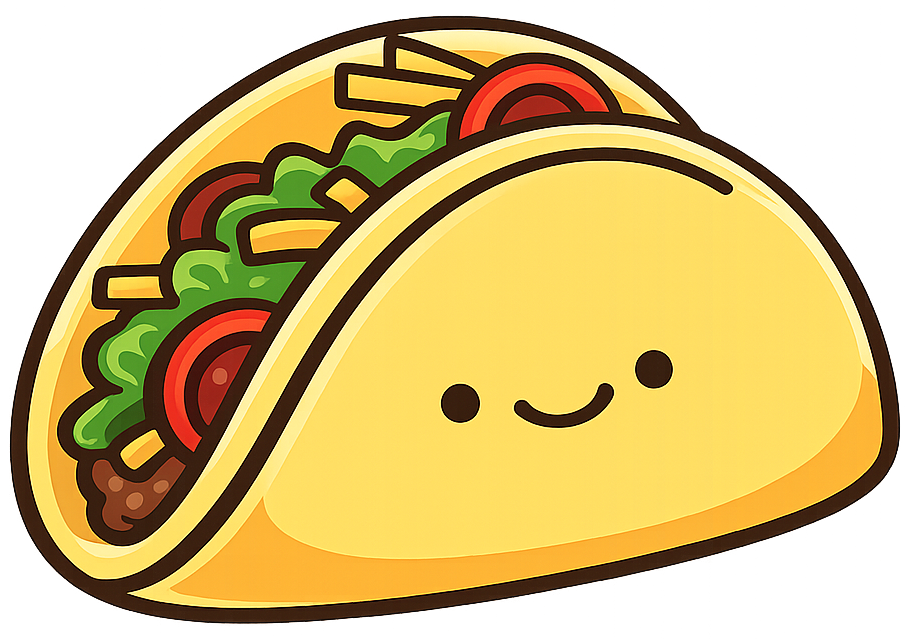}\hspace{0.01em} TacoMAS: Test-Time Co-Evolution of Topology 
and Capability in LLM-based Multi-Agent Systems}

\author{%
  {Chen Xu$^1$~Yicheng Hu$^2$~Ruizi Wang$^2$~~Xinyu Lin$^3$\textsuperscript{\Letter}~~Wenjie Wang$^2$\textsuperscript{\Letter}~Dongrui Liu$^4$~Fuli Feng$^2$}\\
  $^1$Carnegie Mellon University  $^2$University of Science and Technology of China \\$^3$National University of Singapore $^4$Shanghai AI Lab\\
  \texttt{chenxu0427ruc@gmail.com, xylin1028@gmail.com, wenjiewang96@gmail.com}
}

\renewcommand{\today}{}
\newcommand{\githubrepo}{https://github.com/chenxu2-gif/TacoMAS-MultiAgent}
\makeatletter
\renewcommand{\abscontent}{%
    \noindent
    \parbox{\dimexpr\linewidth}{\absfont \theabstract}%
    \vskip1em
    \noindent GitHub: \href{\githubrepo}{\texttt{\githubrepo}}\\
        \vspace{0.4em}
    \hrule
    \vspace{0.3em}
    {\small
    \textsuperscript{\Letter} Corresponding authors.
    }
}
\fancypagestyle{firststyle}{

    \fancyhead[R]{%
        \ifdefined\paperurl
        \if\relax\the\paperurl\relax \else
            \href{\the\paperurl}{\urlheaderfont \itshape \the\paperurl}\\ \fi
        \else
        \fi
        {\footerfont\itshape\monthyeardate\today\hspace{8pt}}%
        \ifthenelse{\boolean{confidential}}{\\ \footerfont \internalonly}{}
    }

    \fancyhead[C]{}

    \fancyfoot[L]{
    \footerfont\itshape
        \ifdefined\correspondingauthor
        \if\relax\detokenize\expandafter{\the\correspondingauthor}\relax
      \else
        \footerfont\itshape {Emails: \the\correspondingauthor\\}%
      \fi
    \fi
    }

    \fancyfoot[R]{
        \ifthenelse{\boolean{confidential}}{
        \ifdefined\reportnumber
        \if\relax\the\reportnumber\relax
        \else \footerfont\itshape  {\footerfont \thepa{} Technical Report \the\reportnumber} \fi
        \else \fi
        }{\footerfont\bfseries\relax}
    }
    \fancyfoot[C]{
        \footerfont\bfseries\relax}
}
\makeatother

\begin{abstract}
Multi-agent systems (MAS) have emerged as a promising paradigm for solving complex tasks. Recent work has explored self-evolving MAS that automatically optimize agent capabilities or communication topologies. However, existing methods either learn a topology that remains fixed at inference time or adapt only topology or capability during inference. We empirically and theoretically show that effective test-time evolution requires jointly adapting both axes, but on different time scales: capabilities should update rapidly to handle emerging subtasks, while the topology should evolve more slowly to preserve coordination stability.
We then introduce \methodname{}, a test-time co-evolution framework for dynamic MAS. \methodname{} formulates MAS inference as a task of online graph adaptation, where nodes represent agents with role-specific capability and edges define their communication topology. 
During inference, a fast capability loop updates agent expertise using trajectory-level feedback, 
while a slow meta-LLM-driven topology loop performs agents' birth-death operations on MAS, including edge edit, agent addition, and agent removal. 
We further show that this fast–slow design drives MAS evolution toward a task-conditioned stable equilibrium. 
Experiments on four benchmarks demonstrate that \methodname{} outperforms nearly 20 multi-agent baselines, achieving an average improvement of 13.3\% over the strongest baseline.
\end{abstract}

\begin{document}

\begingroup
\makeatletter
\renewcommand{\thefootnote}{}
\renewcommand{\@makefnmark}{}
\maketitle
\makeatother
\endgroup




\section{Introduction}
\label{sec:intro}

Recent advances in large language models (LLMs) have enabled increasingly capable autonomous agents, yet many real-world problems remain too complex for a single agent to solve reliably~\citep{wang2024survey,guo2024large,handler2023balancing}. Tasks such as software engineering, retrieval-intensive analysis, and long-horizon planning often require decomposing a problem into multiple interdependent subtasks. Multi-agent systems (MAS) provide a natural solution by coordinating specialized agents with different roles and capabilities. However, their effectiveness depends critically on how agents are organized and how responsibilities are allocated. Therefore, a growing line of research argues that the topology and capabilities of MAS should not be manually fixed, but automatically optimized or evolved for different tasks~\citep{li2024survey,piccialli2025agentai}.

Previous work on evolving MAS can be broadly divided into training-time and test-time approaches. Training-time methods optimize the agent topology or role assignment once and keep it fixed during inference~\citep{hong2023metagpt, zhang2024aflow, shang2024agentsquare, wang2025evoagentx, hu2024automated}. However, because the learned topology is fixed, it can easily mismatch unseen tasks whose latent subtasks and coordination demands deviate from the training distribution. 
Test-time methods instead treat inference as a dynamic evolution process, allowing MAS to adjust online based on intermediate states~\citep{qian2024chatdev,tastan2026stochastic, qu2026coral}. However, existing methods typically evolve either the communication topology~\citep{qian2024chatdev,tastan2026stochastic} or agent capabilities~\citep{qu2026coral} alone. In fact, optimizing both aspects is essential; it is a key prerequisite for unlocking the full collaborative potential of MAS~\citep{kim2025towards}.
This raises a question: how can we jointly adapt both topology and capabilities of MAS during inference?

However, naively combining these directions by updating both topology and capability online is problematic~\citep{papoudakis2021agent}. Evolving the two at the same pace can cause local adaptation to destabilize global coordination (see the theoretical and empirical evidence in \S~\ref{sec:theory-joint} and \ref{sec:exp_analysis}, respectively). For example, when an intermediate error is detected, a verifier agent may need to rapidly strengthen its checking capability. But if the topology is simultaneously rewired, the evidence flow and role dependencies underpinning the verifier agent may shift, turning a useful local update into a system-level failure. This motivates a natural fast–slow separation~\citep{fabiano2021epistemic,mguni2023mansa}, in which capability evolves on the fast timescale and topology on the slow one.

This fast-slow separation is not merely an engineering choice, but follows from evolutionary game theory. We model capability evolution as replicator dynamics over agent strategies and topology evolution as a slower adaptive process over the interaction graph. Together, they form a two-timescale replicator (\textit{i.e., }mutator system), where the fast process tracks the Evolutionarily Stable Strategy (ESS)~\citep{smith1973logic, tayloreshel1978} under the current topology and the slow process updates against this equilibrium response~\citep{borkar1997stochastic, kushneryin2003, nowaksigmund2004}. Intuitively, ESS means that the team has reached a locally stable division of labor, where each agent's capability and interaction pattern are well matched to the task and resistant to small deviations.

Motivated by this principle, we propose \methodname{}, which adapts both \textbf{T}opology and c\textbf{a}pability in a \textbf{co}-evolution framework for \textbf{MAS} during the inference of each query (Fig.~\ref{fig:method}). It consists of 
(1) a fast capability loop, where agents optimize their expertise based on their execution outcomes and contribution to the task in each round\footnote{In practice, this capability refinement can be implemented via updating contextual memory, refining role-specific instructions, or fine-tuning model parameters. Here we just use the contextual memory as an example.}; and
(2) a slow meta-LLM-driven topology loop, which periodically reviews the trajectory and proposes a birth-death (BD) update with a small set of edge and agent edits. 
During the BD process, the meta-LLM decides which edges in the agent topology should be modified and whether to introduce a new agent or remove an ineffective one. In this way, the inference process is guided toward an ESS, as theoretically justified in \S~\ref{sec:theory}.



Following the standard setup of recent multi-agent studies~\citep{kim2025towards}, we evaluate \methodname{} on four benchmarks spanning diverse task regimes: financial problem analysis, web browsing, Minecraft-style planning, and workplace task execution. Compared with nearly 20 MAS baselines, \methodname{} achieves an average improvement of 13.3\% over the strongest baseline across the four datasets.

In summary, our key contributions are three-fold:
\begin{enumerate}[leftmargin=*]
    \item We highlight a key principle for test-time multi-agent evolution: agent capabilities and team topology should be adapted jointly, but on different time scales.

    \item We propose \methodname{}, a test-time co-evolution framework that jointly adapts node capabilities and graph topology through two coupled loops. We further provide a theoretical analysis connecting this fast-slow design, showing convergence under bounded edit rates (\S\ref{sec:theory}).

    \item We conduct extensive experiments on four benchmarks. \methodname{} achieves the best performance on all datasets with an average improvement of $13.3\%$ over the strongest baselines.
\end{enumerate}

\section{Related Work}
\label{sec:related}

\paragraph{Multi-agent LLM systems.}
The shift from single LLM
agents~\citep{yao2022react,shinn2023reflexion,schick2023toolformer}
to multi-agent systems was motivated by tasks that demand
specialized roles and inter-agent coordination, e.g., long-horizon
software development, retrieval-heavy financial analysis, and
multi-step planning~\citep{zhou2024webarena,jimenez2024swebench,wei2025browsecomp}.
The first generation of multi-agent frameworks coordinates a
hand-crafted team of role-specialized agents:
AutoGen~\citep{wu2024autogen} and MetaGPT~\citep{hong2023metagpt}
ship role templates and standardized operating procedures;
CAMEL~\citep{li2023camel} pairs a user agent with an assistant in
a fixed dialogue loop; AgentVerse~\citep{chen2023agentverse} and
ChatDev~\citep{qian2024chatdev} assembles role rosters per task
category. \emph{Limitation:} the graph and roster are designed
once and held fixed; mid-instance signals cannot trigger new
roles or rewiring.

\paragraph{Training- / Offline-evolving multi-agent systems.}
A second line replaces the human designer with automated search
or learning, but the resulting artifact is still frozen at
inference. Two families dominate.
(i) Offline workflow/agent search produces one
graph that all test queries share:
AFlow~\citep{zhang2024aflow} explores workflow graphs with MCTS,
AgentSquare~\citep{shang2024agentsquare} searches a modular
``planning/reasoning/memory/tool-use'' design space,
ADAS~\citep{hu2024automated} alternates a code-space designer with an
executor, and EvoAgentX~\citep{dang2025multiagentcollaboration} mutates agent
populations with evolutionary search.
(ii) Trained per-query graph generators train a
conditional generator once, then sample (and freeze) a fresh
graph for each query: ARG-Designer~\citep{li2026assemble}
autoregressively emits a DAG; MaAS~\citep{zhang2025multi} samples
from a learned agentic supernet; MetaAgent~\citep{zhang2025metaagent}
predicts an FSM of agent transitions;
SwarmAgentic~\citep{zhang2025swarmagentic} assembles teams via a
particle-swarm metaphor; MetaGen~\citep{wang2026metagen} and
EvolveRouter~\citep{huang2026evolverouter} likewise regenerate the
roster / routing per query with only constrained execution-time
edits.
\emph{Limitation:} both families pay the design cost \emph{once}
and then freeze the artifact at inference; whichever graph looked
best at design/sampling time cannot react to evidence that
surfaces only after a few rounds of solving the actual instance.

\paragraph{Test-time evolving multi-agent systems.}
A growing line updates the MAS
during an instance, treating ``inference time'' as a
dynamic process. Existing methods, however, each commit to a
single update axis.
(i) Topology-only: ChatDev-Puppeteer~\citep{dang2025multiagentcollaboration}
has a centralized orchestrator pick the next persona over a fixed
pool; SelfOrg~\citep{tastan2026stochastic} rebuilds a top-$k$
communication DAG every round from response-similarity Shapley
scores. In both, agent prompts and tool policies are fixed.
(ii) Capability-only: CORAL~\citep{qu2026coral} updates a shared
memory and skill bank in a long-running loop, while the
topology stays implicit. 
Crucially, either research line fails to exploit the complete potential of multi-agent collaboration. \methodname\ fills this gap as the first to explore the joint optimization of topology and capability within a single inference. We empirically and theoretically demonstrate that their co-evolutionary interaction is essential for maximizing performance. To formalize this, we leverage evolutionary game theory~\citep{tayloreshel1978,nowaksigmund2004,hofbauer1998evolutionary,akin1979geometry} and two-time-scale stochastic approximation~\citep{borkar1997stochastic,kushneryin2003} as our analytical machinery in \S\ref{sec:theory}.


\section{Method: \methodname{}}
\label{sec:method}

The overview of our proposed framework is illustrated in Figure~\ref{fig:method} and the complete procedure of \methodname\ is summarized in Algorithm~\ref{alg:tc-evo}.


\subsection{Setup and Notation}
\label{sec:method-formal}

\paragraph{Multi-agent system setting.}
Given a query $q$, MAS generate an answer $a$ through a complete forward workflow, \textit{i.e.,} a round of MAS execution. This workflow is defined by the system's configuration, including its agent roles, individual capabilities, and communication topology. Different MAS frameworks adopt varied designs for these components to optimize task performance.

\paragraph{Test-time evolution.}
Unlike static systems, we perform an online evolution of the MAS during the inference of each query. We formalize the system as a directed agent graph $\mathcal{G}_t = (T_t, \Phi_t)$ indexed by execution round $t=0,1,\dots,R$. This representation explicitly decouples the system into two parts: 
\textbf{topology $T_t = (\mathcal{V}_t, \mathcal{E}_t)$}, where $\mathcal{V}_t$ is the set of agents (vertices) and $\mathcal{E}_t \subseteq \mathcal{V}_t \times \mathcal{V}_t$ is the set of directed edges (information channels).
In addition, we have \textbf{capability $\Phi_t = \{\phi_{v,t}\}_{v \in \mathcal{V}_t}$}, denoting the collection of capability states, where each $\phi_{v,t}$ encompasses an agent's specific prompt, contextual memory, and tool inventory. 
In our framework, a \textbf{Meta-LLM} $\mathcal{M}$ initializes $\mathcal{G}_0$ and orchestrates its subsequent evolution. The agents $v \in \mathcal{V}_t$ in the graph instantiate specific roles from a fixed pool $\mathcal{R}$ (\textit{e.g.,} {Planner, Searcher, Verifier}).


\begin{figure}
    \centering
    \includegraphics[width=\linewidth]{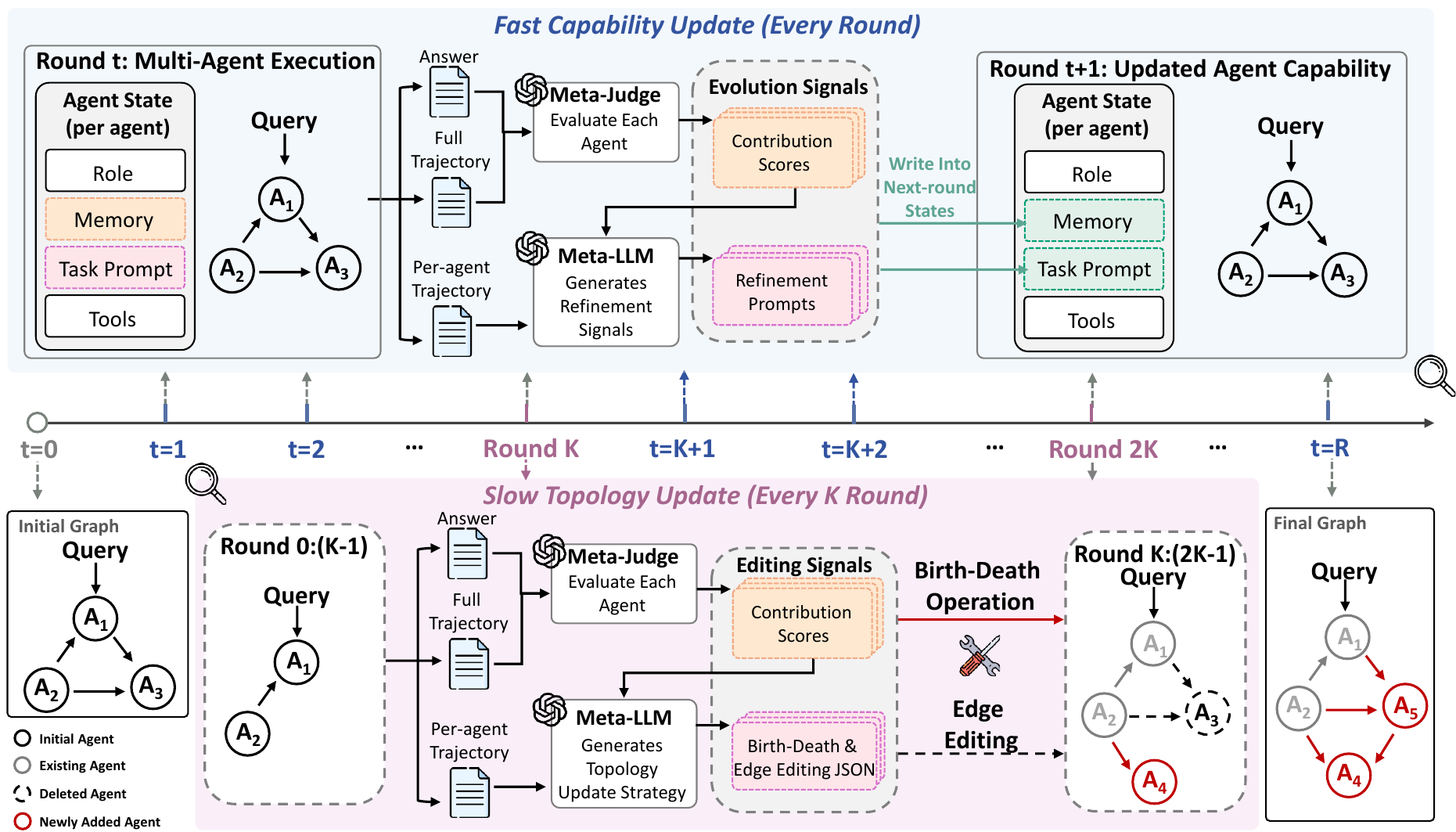}
    \caption{The overview framework of TacoMAS.}
    \label{fig:method}
\end{figure}





\subsection{Two-time-scale Dynamics}
\label{sec:method-twoscale}

The central design of \methodname\ is the asynchronous co-evolution of agent capabilities $\Phi$ and topology $T$ on two distinct time scales. This joint update process is formulated as:
\begin{equation}
\Phi_{t+1} = F^C(\Phi_t; T_t, \xi_t), \qquad T_{t+K} = F^T(T_t; \Phi_{t:t+K}, \xi_{t:t+K}),
\label{eq:twoscale}
\end{equation}
where $F^C$ and $F^T$ denote the capability and topology operators, respectively, and $K \geq 1$ is the slow-update interval. Specifically, the fast capability update $F^C$ occurs in every execution round. It allows agents to immediately incorporate feedback from the trajectory $\xi_t$ to adapt their reasoning patterns and tool-use strategies within the current topology. In contrast, the slow topology update $F^T$ modifies the communication topology only after $K$ rounds. This slower rhythm ensures that the topology remains stable for a sufficient duration, allowing agents to reach their performance ceiling under the given topology before the system considers a structural overhaul.

This two-time-scale design is essential to maintain the stability of the co-evolution process. If the topology $T$ changes as rapidly as individual capabilities $\Phi$, the refined strategies of agents may become obsolete due to sudden shifts in their information sources or collaborators. Such rapid structural changes can lead to systemic divergence. By decoupling these two processes, the fast dynamics effectively track a quasi-stationary equilibrium under a fixed architecture. The slow loop then optimizes the underlying graph topology based on the aggregated performance observed across multiple rounds. Consequently, the interval $K$ serves as a critical parameter to balance local strategy adaptation with global structural exploration.

\subsection{Fast Capability Loop $F^C$}
\label{sec:method-fast}

Within each execution round $t$, the fast capability loop optimizes the expertise of individual agents under a fixed topology $T_t$. Every agent $v \in \mathcal{V}_t$ executes its assigned role based on its current capability state $\phi_{v,t}$, which is instantiated through a combination of role-specific instructions and contextual memory. This process generates a \textit{per-agent trajectory} $r_{v,t}$, including reasoning steps, tool-use outcomes, and outgoing messages. The per-agent trajectory collectively forms the round's full execution trajectory $\xi_t = \{r_{v,t}\}_{v \in \mathcal{V}_t}$. 

\paragraph{Capability update via memory refinement.} 
In practice, the capability update $F^C$ is realized by a meta-judge and a meta-LLM acting as a diagnostic coach. After each round, the system generates evolution signals that are written back to the agent's state to update $\phi_{v,t}$, which includes two parts: 
1) \textit{Evaluation signals}: To ensure objective assessment, the meta-judge evaluates each agent's behavior based on the full trajectory $\xi_t$ to provide a numerical contribution score $c_{v,t}$ and a textual justification for the rating. 
2) \textit{Refinement signals}: To improve each agent's capability, the meta-LLM diagnoses the agent's specific \textit{per-agent trajectory} $r_{v,t}$ and the meta-judge's feedback $(c_{v,t}, \text{justification})$. 
It generates feedback identifying specific errors in $r_{v,t}$ and a concrete execution plan for the subsequent round. 
During the next round's initialization, these results are incorporated into the agent's contextual prompt, effectively refining its capability state $\phi$ via memory refinement (detailed prompts for meta-judge and meta-LLM can be found in App.~\ref{app:prompts}). 


\paragraph{Theoretical abstraction of capability evolution.} 
To analyze this process, we model the agents' capability evolution as a discrete replicator-style update \cite{hofbauer1998evolutionary} over the capability states. Intuitively, this mechanism acts as a ``selection pressure'' that reallocates computational influence toward higher-performing behaviors \cite{hofbauer1998evolutionary}.
\begin{equation}
\phi_{v,t+1} \propto \phi_{v,t} \exp\Bigl(\eta\bigl(c_{v,t} - \bar{c}_t\bigr)\Bigr), \quad \bar{c}_t = \tfrac{1}{|\mathcal{V}_t|}\sum_{u \in \mathcal{V}_t} c_{u,t}
\label{eq:fast}
\end{equation}
where $\bar{c}_t$ is the mean contribution and $\eta > 0$ controls the update strength \cite{hofbauer1998evolutionary}. This formulation captures the population-level effect of the agent's capability state updates: while the meta-LLM provides textual refinement for all agents, the reinforcement is biased such that high-contributing patterns are amplified and prioritized, while erroneous or marginal behaviors are effectively suppressed within the team's collective reasoning \cite{hofbauer1998evolutionary}.

\paragraph{Connecting theoretical abstraction to meta-LLM actions.} 
To ensure that these implementation-level actions are consistent with the replicator flow (Eq. (\ref{eq:fast})), we introduce the following assumption to justify that the meta-LLM effectively drives the system towards higher performance. 


\begin{assumption}[Replicator-bias of the meta-LLM update]
\label{ass:llm-bias}
There exists $\eta > 0$ and slack $\epsilon_{\textrm{noise}} \ge 0$ such that, for every fast round $t$:
\begin{equation}
\mathbb{E}\bigl[\bar{c}_{t+1} \mid \mathcal{H}_t\bigr] \ge \bar{c}_t + \eta\,\mathbb{V}\mathrm{ar}_v \bigl[c_{v,t}\bigr] - \eta\,\epsilon_{\textrm{noise}}
\label{eq:llm-bias}
\end{equation}
where $\bar{c}_t$ is the team mean contribution, and $\mathcal{H}_t = \{\xi_\tau, c_\tau\}_{\tau \le t}$ denotes the filtration of trajectories and scores up to round $t$.
\end{assumption}

This assumption implies that the meta-LLM's refinement acts as a \textit{Shahshahani gradient ascent} on the mean fitness, ensuring that the heuristic memory updates are statistically aligned with the formal replicator dynamics. Specifically, it guarantees that the textual modifications systematically improve the MAS performance (empirical justification is provided in App.~\ref{app:assump_verify}). 







\subsection{Slow Topology Loop $F^T$}
\label{sec:method-slow}

While the fast capability loop optimizes per-agent capability, the slow update $F^T$ reconfigures the MAS topology by modifying the sets of agents $\mathcal{V}$ and edges $\mathcal{E}$. 
After every $K$ rounds, the meta-LLM $\mathcal{M}$ proposes a structural delta $\Delta T = (\Delta\mathcal{V}, \Delta\mathcal{E})$ to resolve systemic bottlenecks that individual capability refinement cannot fix. 

\paragraph{Per-agent birth-death and edge edits.}
The structural delta $\Delta T$ is realized through two operations:
1) {Birth-Death:} A \textit{birth} introduces a new agent role to expand functional capacity, while a \textit{death} removes agents whose contribution scores $c_{v}$ remain consistently low. This process mimics discrete mutation by altering the system's ``population support'' to escape local optima. 
2) {Edge Reconfiguration:} $\mathcal{M}$ adds or removes communication channels to repair information flow. For instance, if a verifier lacks sufficient context, $\mathcal{M}$ may create a new edge from a high-contribution searcher to bridge the evidence gap. The two operations are implemented via textual prompt (see App.~\ref{app:prompts}).


\paragraph{Update stability.}
To maintain the stability of the two-time-scale dynamics, we introduce {edit budgets} on the structural update: 
\begin{equation}
T_{t+K} = T_t \oplus \Delta T, \quad |\Delta\mathcal{V}| \le B_{\mathcal{V}},\;\; |\Delta\mathcal{E}| \le B_{\mathcal{E}},
\label{eq:slow}
\end{equation}
where $B_{\mathcal{V}}$ and $B_{\mathcal{E}}$ represent the maximum allowed edits for agents and edges, respectively. This constraint prevents abrupt topological shifts from destabilizing the refined capability states $\phi$. By limiting structural volatility, we ensure that the progress gained through fast-loop evolution is preserved during reconfiguration.

\begin{algorithm}[htbp]
\small
\caption{\methodname\ Procedure }
\label{alg:tc-evo}
\begin{algorithmic}[1]
\Require query $q$; round cap $R$; slow interval $K$; edit budget $\mathbf{B}=(B_\V,B_\E)$; threshold $\tau$.
\State $\G_0 \leftarrow \mathcal{M}_{\textrm{init}}(q)$;\quad $t \leftarrow 0$.
\Repeat
    \For{$v \in \V_t$} \hfill \Comment{fast capability loop $F^C$}
        \State execute tool policy of $v$ under $(\phi_{v,t},\E_t)$, observe $r_{v,t}$
        \State $c_{v,t} \leftarrow \mathcal{J}(r_{v,t})$;\quad $\phi_{v,t+1} \leftarrow F^C(\phi_{v,t}; r_{v,t},c_{v,t})$ \hfill \eqref{eq:fast}
    \EndFor
    \State read answer $a_t$, score $s_t \leftarrow \mathcal{J}(a_t,q)$
    \If{$(t+1)\bmod K = 0$ \textbf{and} $s_t < \tau$} \hfill \Comment{slow topology loop $F^T$}
        \State $\Delta T, \{\Delta\phi_v\} \leftarrow \mathcal{M}_{\textrm{slow}}(T_t, \Phi_{t-K:t}, \{c_{v,\tau}\})$
        \State enforce $|\Delta \V| \le B_\V,\,|\Delta \E| \le B_\E$;\quad $T_{t+1} \leftarrow T_t \oplus \Delta T$ \hfill \eqref{eq:slow}
    \EndIf
    \State $t \leftarrow t + 1$
\Until{$s_t \ge \tau$ \textbf{or} $t \ge R$ \textbf{or} \textsc{stop}}
\State \Return $a_t$ from the sink agent of $\G_t$.
\end{algorithmic}
\end{algorithm}

\paragraph{Initialization and termination.}
The meta-LLM $\mathcal{M}$ seeds the initial graph $\mathcal{G}_0$ by selecting roles from the pool $\mathcal{R}$; we set $|\mathcal{V}_0| = 5$. The evolution process terminates when one of the following conditions is met: 1) the global score $s_t$ reaches the task-specific success threshold $\tau$; 2) the execution reaches the maximum round budget $R$; or 3) the meta-LLM issues a \textit{stop} signal upon detecting convergence in the agent trajectories. 

\section{Theoretical Analysis}
\label{sec:theory}

We provide a lightweight analysis of \methodname{} as a two-time-scale replicator--mutator process.  Full proofs are provided in App.~\ref{app:proofs}.

\subsection{Fast Loop as Replicator Dynamics}
\label{sec:theory-fast}

The fast capability update in Eq.~\eqref{eq:fast} has the standard form
of a discrete replicator update: behaviors with above-average
contribution are amplified, while below-average behaviors are
suppressed. Under a fixed topology $T_t$, this update approximates the
continuous replicator flow
\begin{equation}
\dot{\phi}_{v}
=
\phi_v\bigl(f_v-\bar f\bigr),
\label{eq:replicator}
\end{equation}
where $f_v$ denotes the expected contribution of agent $v$ and
$\bar f$ is the team-average contribution. This flow is a
Shahshahani-gradient ascent on mean fitness
\citep{akin1979geometry,hofbauer1998evolutionary}.

\begin{assumption}[Bounded contribution noise]
\label{ass:fitness-bdd}
The meta-judge contribution score $c_{v,t}$ is a bounded noisy estimate
of the expected contribution $f_v$, with noise bounded by $\epsilon$.
\end{assumption}

\begin{proposition}[Fast-loop improvement]
\label{prop:fast-monotone}
Under Assumption~\ref{ass:fitness-bdd}, one fast
update satisfies
\begin{equation}
\mathbb{E}\!\left[\bar f_{t+1}\mid \mathcal{H}_t\right]
\ge
\bar f_t - \eta\epsilon .
\end{equation}
Moreover, when contribution variance is nonzero, the expected update is
biased toward increasing the team-average contribution.
\end{proposition}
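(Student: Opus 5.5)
We will analyze the discrete replicator step of Eq.~\eqref{eq:fast} directly, treating $\phi_{v,t}$ as a probability weight over the team. Normalization gives $\phi_{v,t+1}=\phi_{v,t}e^{\eta(c_{v,t}-\bar c_t)}/Z_t$ with $Z_t=\sum_u\phi_{u,t}e^{\eta(c_{u,t}-\bar c_t)}$. Throughout we take the team-average fitness to be the $\phi$-weighted mean $\bar f_t=\sum_v\phi_{v,t}f_v$, with the expected contributions $f_v$ held fixed under the current topology $T_t$. This is the reading under which Eq.~\eqref{eq:replicator} is a Shahshahani gradient flow. The plan is to expand $\bar f_{t+1}-\bar f_t$ to first order in $\eta$ and then handle the noise using Assumption~\ref{ass:fitness-bdd}.

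\textbf{Step 1 (exact increment).} The one-step change is
\begin{equation}
\bar f_{t+1}-\bar f_t=\sum_v\phi_{v,t+1}f_v-\sum_v\phi_{v,t}f_v=\mathrm{Cov}_{\tilde\phi}\bigl(f,\,e^{\eta(c-\bar c_t)}\bigr)/\mathbb{E}_{\phi_t}\bigl[e^{\eta(c-\bar c_t)}\bigr].
\end{equation}
Linearizing $e^{x}=1+x+O(x^2)$ reduces this to $\eta\,\mathrm{Cov}_{\phi_t}(f,c)+O(\eta^2)$.

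\textbf{Step 2 (split the noise).} Write $c_{v,t}=f_v+\epsilon_{v,t}$ with $|\epsilon_{v,t}|\le\epsilon$. Then
\begin{equation}
\mathrm{Cov}_{\phi_t}(f,c)=\mathrm{Var}_{\phi_t}(f)+\mathrm{Cov}_{\phi_t}(f,\epsilon_t).
\end{equation}
The first term is nonnegative. For the second, we bound $|\mathrm{Cov}(f,\epsilon)|\le\mathbb{E}_{\phi_t}|f-\bar f_t|\cdot\epsilon$. Since contributions are bounded scores (normalizing them to $[0,1]$ if necessary), this is at most $\epsilon$.

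\textbf{Step 3 (take expectations).} Conditioning on $\mathcal{H}_t$ gives
\begin{equation}
\mathbb{E}[\bar f_{t+1}\mid\mathcal{H}_t]\ge\bar f_t+\eta\,\mathrm{Var}_{\phi_t}(f)-\eta\epsilon-O(\eta^2).
\end{equation}
Dropping the variance term yields the claimed bound $\bar f_t-\eta\epsilon$. When $\mathrm{Var}(f)>0$, the same display shows a strict upward drift whenever $\mathrm{Var}(f)>\epsilon$. This is the ``biased toward increasing'' clause, and it mirrors the form of Assumption~\ref{ass:llm-bias}.

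\textbf{Main obstacle.} The difficulty is the $O(\eta^2)$ remainder, since the stated bound has no second-order term. I would first try an exact convexity argument in place of the linearization. The map $\eta\mapsto\log Z$ is convex, and $\sum_v\phi_{v,t+1}c_v=\frac{d}{d\eta}\log Z_t(\eta)$ is therefore nondecreasing in $\eta$. This gives $\sum_v\phi_{v,t+1}c_{v,t}\ge\sum_v\phi_{v,t}c_{v,t}$ exactly, with no remainder. We would then convert from $c$ back to $f$ at a cost of $|\epsilon_v|$ on each side, which gives a loss of $2\epsilon$ rather than $\eta\epsilon$.

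Recovering the factor $\eta$ needs one further input. If the noise $\epsilon_{v,t}$ is conditionally mean-zero given $\mathcal{H}_t$, its first-order effect cancels in expectation. If instead the noise is only bounded, small step sizes $\eta\le1$ are required, and the remainder can be absorbed into the slack, as the paper's looser use of $\epsilon$ in Assumption~\ref{ass:llm-bias} suggests. I would state whichever of these two conditions is needed explicitly.
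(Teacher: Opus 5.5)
Your Steps 1--3 follow the paper's proof. The paper also linearizes the normalized replicator step (its \eqref{eq:disc-step}), substitutes $c=f+\zeta$, arrives at your Step~3 display (its \eqref{eq:fast-fitness}), and drops the variance term. Your bound $|\mathrm{Cov}_{\phi_t}(f,c-f)|\le\epsilon\,\mathbb{E}_{\phi_t}|f-\bar f_t|$, with $f\in[0,1]$, is more careful than the paper, which leaves implicit why the noise costs only $\eta\epsilon$. Freezing $f_v$ within a step is also what the paper tacitly does when it writes $\tfrac{d}{dt}\bar f=\mathrm{Var}_\pi(f)$ in \eqref{eq:lyap-cont}. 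The paper disposes of the $O(\eta^2)$ remainder exactly like your second fallback (``$\eta$ small enough that $O(\eta^2)\le\eta\epsilon$''), which strictly only yields $\bar f_t-2\eta\epsilon$.

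What needs correcting is your conclusion that recovering the factor $\eta$ requires an extra hypothesis. Your own two ingredients close the gap exactly. Put $\phi^{(s)}_v\propto\phi_{v,t}e^{s c_{v,t}}$ and $g(s)=\sum_v\phi^{(s)}_v f_v$, so that $g(0)=\bar f_t$ and $g(\eta)=\bar f_{t+1}$. Then
\[
g'(s)=\mathrm{Cov}_{\phi^{(s)}}(f,c)=\mathrm{Var}_{\phi^{(s)}}(f)+\mathrm{Cov}_{\phi^{(s)}}(f,c-f)\;\ge\;-\epsilon\,\mathbb{E}_{\phi^{(s)}}\bigl|f-g(s)\bigr|\;\ge\;-\epsilon ,
\]
which is your Step~2 estimate applied at every point of the tilt path rather than only at $s=0$. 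Integrating over $s\in[0,\eta]$ gives $\bar f_{t+1}\ge\bar f_t-\eta\epsilon$ pointwise, for every $\eta>0$, with merely bounded noise and no remainder.

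Without normalizing $f$, you can use $\mathrm{Var}\ge(\mathbb{E}|f-g|)^2$ instead, which gives $g'(s)\ge-\epsilon^2/4$. The exact integral $\int_0^\eta\bigl[\mathrm{Var}_{\phi^{(s)}}(f)-\epsilon\,\mathbb{E}_{\phi^{(s)}}|f-g(s)|\bigr]\,ds$ also makes the ``biased toward increase'' clause precise: the drift is positive whenever the fitness spread dominates the noise along the path.

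Your convexity argument lost the factor $\eta$ only because it tracked $\sum_v\phi^{(s)}_v c_v$ and then paid $\epsilon$ at each endpoint to convert back to $f$. The mean-zero-noise route would not have helped either. Since $c_t$ is part of $\mathcal{H}_t$, $\bar f_{t+1}$ is already $\mathcal{H}_t$-measurable, so there is nothing left to average. Even with averaging, mean-zero noise removes only the first-order noise term, not the $O(\eta^2)$ remainder.
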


Proposition~\ref{prop:fast-monotone} formalizes the role of the
capability loop: it improves agents' local reasoning strategies under
the current communication structure. However, it cannot add new agents,
remove ineffective ones, or repair missing communication channels. Thus,
the fast loop may converge to a topology-dependent plateau.

\subsection{Slow Loop as Bounded Mutation}
\label{sec:theory-slow}

The slow topology update addresses this limitation. Every $K$ rounds,
the meta-LLM applies a bounded structural edit
$\Delta T=(\Delta\mathcal{V},\Delta\mathcal{E})$, as defined in
Eq.~\eqref{eq:slow}. Birth--death operations change the agent support,
while edge edits change the communication topology. These operations
act as mutation steps over the current multi-agent organization.

\begin{assumption}[Bounded and biased topology edits]
\label{ass:meta-bias}
Each slow update obeys the edit budgets in Eq.~\eqref{eq:slow}. In
addition, conditioned on the recent trajectory, the proposed edit
improves the best achievable team contribution under the topology with
probability $p>1/2$.
\end{assumption}

This assumption captures the intended behavior of the meta-LLM:
it is not required to always find a better topology, but its edits are
more likely to move the system toward a better communication topology
than away from it.

\subsection{Joint Two-Time-Scale Convergence}
\label{sec:theory-joint}

Combining the two loops yields a replicator-mutator process. The fast
replicator phase moves the agents toward a local performance plateau
under the current topology, and the slow mutation phase changes the
topology when this plateau is insufficient. Let $L(\Phi,T)$ denote the
distance to the set of locally stable high-performing configurations,
as defined in App. ~\ref{app:proofs}.

\begin{theorem}[Two-time-scale convergence]
\label{thm:joint}
Under Assumptions~\ref{ass:fitness-bdd}--\ref{ass:meta-bias}, there
exists $\gamma>0$ such that the joint update satisfies
\begin{equation}
\mathbb{E}\!\left[L(\Phi_{t+1},T_{t+1})\mid\mathcal{H}_t\right]
\le
(1-\gamma)L(\Phi_t,T_t)+\tilde{\epsilon},
\end{equation}
where $\tilde{\epsilon}$ collects contribution-score noise,
meta-LLM errors, and discretization slack.
\end{theorem}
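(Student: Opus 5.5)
The plan is to build $L$ as a sum of two potentials, one contracted by each loop, and to combine their contractions along the lines of two-time-scale stochastic approximation \citep{borkar1997stochastic,kushneryin2003}. Write
\[
L(\Phi,T) = L_C(\Phi;T) + \lambda\, L_T(T),
\]
where $L_C(\Phi;T) = f^\star(T) - \bar f(\Phi;T) \ge 0$ is the gap between the current team-average contribution and the best plateau $f^\star(T)$ reachable by the replicator flow \eqref{eq:replicator} under topology $T$. The second term is $L_T(T) = f^{\star\star} - f^\star(T)$, the gap between that plateau and the best locally stable (ESS-type) value achievable within the edit-reachable neighbourhood. The weight $\lambda>0$ will be fixed later. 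Both terms are bounded because contributions are bounded (Assumption~\ref{ass:fitness-bdd}). This boundedness is what lets constant-size expected improvements be rewritten as multiplicative contractions.

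\textbf{Fast step.} On rounds with no topology edit, $T_{t+1}=T_t$ and $L_T$ is unchanged. Proposition~\ref{prop:fast-monotone} gives $\mathbb{E}[\bar f_{t+1}\mid\mathcal{H}_t] \ge \bar f_t + \eta\,\mathbb{V}\mathrm{ar}_v[f_{v,t}] - \eta\epsilon$. To turn this into a contraction of $L_C$, I will assume a local Polyak--\L{}ojasiewicz-type inequality for the Shahshahani gradient near the plateau:
\[
\mathbb{V}\mathrm{ar}_v[f_v] \ge \kappa\, L_C(\Phi;T).
\]
This holds near a strict ESS by the standard Lyapunov argument for replicator dynamics \citep{hofbauer1998evolutionary}, and it is part of what the appendix definition of $L$ must encode. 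The result is
\[
\mathbb{E}[L_C(\Phi_{t+1};T_t)\mid\mathcal{H}_t] \le (1-\eta\kappa)\,L_C(\Phi_t;T_t) + \eta\epsilon .
\]

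\textbf{Slow step.} Every $K$ rounds, and only if $s_t<\tau$, the meta-LLM applies $\Delta T$. By Assumption~\ref{ass:meta-bias}, with probability $p$ the edit lowers $L_T$ by at least some $\delta>0$ (a minimum improvement size, added as a mild regularity condition). With probability $1-p$ it raises $L_T$ by at most $\Delta_{\max}$, and the edit budgets $B_{\mathcal V},B_{\mathcal E}$ are what make $\Delta_{\max}$ finite. The expected change is therefore at most $-(p\delta-(1-p)\Delta_{\max})$, which is negative provided the budget keeps $\Delta_{\max}$ small relative to $\delta$; this is where $p>1/2$ enters. Dividing by $\sup L_T$ converts this into a factor $(1-\gamma_T)$. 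The edit also perturbs $L_C$, since $f^\star(T)$ and the agent set both move. I will bound this jump by a Lipschitz constant $C_B$ depending on the budgets and charge it to $\tilde\epsilon$.

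\textbf{Combining.} I average over a window of $K$ rounds, which contains $K$ fast contractions and one slow edit. Choosing $K$ large enough that $(1-\eta\kappa)^K C_B$ is dominated lets $L_C$ re-equilibrate before the next edit; this is the time-scale separation. The window-level contraction is then converted back to a per-round factor $\gamma = \min\{\eta\kappa,\gamma_T/K\}$, adjusted for $\lambda$. Slack from the discrete replicator approximation of Eq.~\eqref{eq:fast} is absorbed into $\tilde\epsilon$ along with $\eta\epsilon$ and $C_B$.

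\textbf{Main obstacle.} The cross term is the hardest step: a topology edit can raise $L_C$ by changing which plateau the fast loop tracks. I would handle it by choosing $\lambda$ large enough that the guaranteed decrease in $L_T$ dominates the increase in $L_C$, mirroring the weighted-Lyapunov argument of Borkar. If that fails, I would state the contraction on the subsampled sequence $t\in K\mathbb{N}$ only.
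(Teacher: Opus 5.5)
Your skeleton matches the paper's. You use a capability gap plus a heavily weighted topology potential; the paper's $L=d(T,\mathcal{A})+\eta(\bar f^\ast(T)-\bar f(\pi,T))$ is, up to a factor, your form with $\lambda=1/\eta$. The fast phase is controlled by the replicator ascent bound, the slow phase is driven by Assumption~\ref{ass:meta-bias}, and the edit-induced jump is charged to $\tilde\epsilon$ through Lipschitz bounds. However, two of your steps need more than Assumptions~\ref{ass:fitness-bdd}--\ref{ass:meta-bias} supply.

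\emph{First}, the inequality $\mathbb{V}\mathrm{ar}_v[f_v]\ge\kappa L_C$ cannot hold uniformly. At and near every non-optimal rest point of the replicator dynamics (every vertex of the simplex is one), all active roles have $f_v=\bar f$. So the variance vanishes while $f^\star(T)-\bar f>0$. A version local to a strict ESS does not give the one-step inequality at every state that the theorem asserts. It is also unnecessary. The paper extracts no contraction from the fast loop at all. It iterates the ascent bound using only $\mathbb{V}\mathrm{ar}\ge 0$, so the fast phase merely adds drift $K\eta^2\epsilon+O(K\eta^3)$ to $L$. It then disposes of the capability gap by boundedness, $\eta\,\mathrm{gap}\le(1-\gamma)\eta\,\mathrm{gap}+\gamma\eta M$, which is the $\gamma\eta M$ term in $\tilde\epsilon$. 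The entire rate $\gamma$ comes from the slow loop.

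\emph{Second}, and more seriously, your slow step does not follow from $p>1/2$. With a value-gap potential $L_T=f^{\star\star}-f^\star(T)$, Assumption~\ref{ass:meta-bias} gives only the probability of an improvement, not its size. Negative drift therefore needs $p\delta>(1-p)\Delta_{\max}$, which is strictly stronger than $p>1/2$ whenever $\Delta_{\max}>\delta$. The edit budgets do not close this gap. They bound $|\Delta\mathcal{V}|$ and $|\Delta\mathcal{E}|$, not the change in $f^\star$; a single edge removal that cuts the only evidence path to the sink can drop $f^\star$ sharply. On the paper's finite topology lattice, $\delta>0$ exists for free, but nothing controls $\Delta_{\max}/\delta$.

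The paper avoids magnitudes by using a unit-step potential: $d(T,\mathcal{A})$, the vertex-plus-edge symmetric-difference distance to the locally optimal set. Each edit moves it by at most one, so a biased random-walk argument gives $\mathbb{E}[d(T_{t+K},\mathcal{A})\mid\mathcal{H}_{t+K^{-}}]\le(1-\gamma)\,d(T_t,\mathcal{A})+\epsilon_{\mathrm{meta}}$ with $\gamma=2p-1$ and $\epsilon_{\mathrm{meta}}\le B_{\mathcal{V}}+B_{\mathcal{E}}$. The price is reading Assumption~\ref{ass:meta-bias} as moving $T$ toward $\mathcal{A}$ rather than as raising $\bar f^\ast(T)$. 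Either switch to such a potential, or state your ratio condition as an explicit extra hypothesis.

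Two smaller points. Taking $\lambda>1$ cancels only the part of the $L_C$ jump due to the shift in $f^\star$; for fixed $f^{\star\star}$ that part equals $-\Delta L_T$. The reprojection jump in $\bar f$ is a constant that no multiplicative decrease of $L_T$ can dominate near $L_T=0$, so it must go into $\tilde\epsilon$, as in the paper. Also, $L_T$ does not move on non-edit rounds. So a per-round $\gamma$, whether yours or the one in the paper's final recasting step, is obtained only by padding $\tilde\epsilon$ with $\gamma\lambda\sup L_T$. The substantive bound is the per-cycle one on $t\in K\mathbb{N}$.
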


Theorem~\ref{thm:joint} shows that the expected distance to the stable
configuration set contracts up to a noise-controlled neighborhood. In
particular, the fast loop alone can only optimize capabilities within a
fixed topology, while the slow loop provides the mutation needed to
escape topology-induced plateaus. This explains why jointly evolving
capabilities and topology is more effective than updating either side
alone.
\section{Experimental Results}
\label{sec:experiments-results}

\subsection{Experimental Setup}
\label{sec:experiments-setup}

\paragraph{Benchmarks.}
According to \citet{kim2025towards}, we evaluate on four benchmarks covering distinct reasoning regimes:
\texttt{finance}~\citep{bigeard2025finance} for retrieval-heavy
financial analysis over SEC filings; \texttt{browsecomp-plus}~\citep{chen2025browsecomp}
for entity disambiguation and multi-hop retrieval over a curated corpus;
\texttt{plancraft}~\citep{dagan2024plancraft} for Minecraft-style
crafting and feasibility planning; and \texttt{workbench}~\citep{styles2024workbench}
for realistic workplace workflows with tool use. Together, they test
retrieval, planning, numerical reasoning, and cross-tool coordination.

\paragraph{Baselines.} We compare \methodname{} with 20 multi-agent baselines, grouped by when
adaptation occurs. \textit{Fixed-topology} methods use manually
specified communication patterns and never evolve. \textit{Offline-evolved}
methods search or optimize workflows once before deployment and freeze
them at inference. \textit{Per-instance} methods generate a graph for
each query but keep it fixed while solving. \textit{Within-instance}
methods adapt during inference, but only along one axis: topology
(ChatDev-Puppeteer, SelfOrg) or capability (CORAL).
All baselines use the same LLM backend and dataset-specific tools unless
otherwise stated. Detailed baseline descriptions are in
App.~\ref{app:baselines}.

\paragraph{Metrics.}

We report accuracy ($\acc$), which is the mean rubric-judged accuracy over instances~\citep{kim2025towards}. Each instance may contain one or more evaluation criteria, including correctness and contradiction rubrics, and the instance score is the fraction of rubric items passed.

\paragraph{Detailed settings.}
All agents use Gemini-2.5-flash-lite as the base LLM, while GPT-4o-mini is used as the rubric judge. For \methodname{}, the meta-LLM
defaults to Gemini-2.5-pro. Unless otherwise stated, \methodname{} runs for at most $R=10$ fast rounds with slow topology updates every $K=2$ rounds, agent cap $|\V|_{\max}=20$, at most two birth/death pairs, and at most four edge edits per slow update.
The initial graph is the same 5-role centralized template used by the fixed-topology baseline, consisting of planner, searcher, calculator,
verifier, and reflector agents, so improvements are attributable to evolution.

\begin{table*}[t]
\centering
\scriptsize
\setlength{\tabcolsep}{4.0pt}
\renewcommand{\arraystretch}{1.05}
\caption{
Main accuracy results. Baselines are grouped by their evolution time
scale. Best accuracy in each dataset is in bold. The best baseline in
each dataset is underlined, and the last row reports the absolute
accuracy improvement of \methodname{} over the best baseline.
}
\label{tab:main-results}
\begin{tabular}{lcccc}
\toprule
Method
& finance & browsecomp & plancraft & workbench \\
\midrule

\multicolumn{5}{l}{\textbf{Offline-evolved}} \\
MetaGPT~\citep{hong2023metagpt}             & 0.348 & 0.550 & 0.668 & 0.473 \\
AFlow~\citep{zhang2024aflow}                & 0.343 & 0.633 & 0.717 & 0.478 \\
AgentSquare~\citep{shang2024agentsquare}    & 0.286 & 0.625 & 0.574 & 0.440 \\
EvoAgentX~\citep{wang2025evoagentx}         & 0.313 & 0.591 & 0.763 & 0.419 \\
ADAS~\citep{hu2024automated}                & 0.294 & 0.595 & 0.497 & 0.645 \\
\midrule

\multicolumn{5}{l}{\textbf{Per-instance graph design}} \\
AgentVerse~\citep{chen2023agentverse}       & 0.301 & 0.625 & 0.653 & 0.523 \\
ARG-Designer~\citep{li2026assemble}         & 0.295 & 0.621 & 0.619 & 0.595 \\
MaAS~\citep{zhang2025multi}                 & 0.284 & 0.620 & 0.492 & 0.538 \\
MetaAgent~\citep{zhang2025metaagent}        & 0.349 & 0.601 & 0.560 & 0.469 \\
SwarmAgentic~\citep{zhang2025swarmagentic}  & 0.338 & 0.570 & \underline{0.812} & \underline{0.651} \\
MetaGen~\citep{wang2026metagen}             & 0.419 & 0.513 & 0.479 & 0.492 \\
EvolveRouter~\citep{huang2026evolverouter}  & 0.332 & 0.170 & 0.320 & 0.290 \\
\midrule

\multicolumn{5}{l}{\textbf{Fixed-topology}~\citep{kim2025towards}} \\
SAS                                         & \underline{0.539} & 0.200 & 0.530 & 0.347 \\
MAS-Independent                             & 0.529 & 0.090 & 0.710 & 0.416 \\
MAS-Decentralized                           & 0.445 & 0.240 & 0.600 & 0.446 \\
MAS-Centralized                             & 0.500 & 0.270 & 0.560 & 0.386 \\
MAS-Hybrid                                  & 0.537 & 0.260 & 0.540 & 0.386 \\
\midrule

\multicolumn{5}{l}{\textbf{Within-instance evolution}} \\
ChatDev-Puppeteer~\citep{qian2024chatdev}   & 0.340 & 0.603 & 0.553 & 0.441 \\
SelfOrg~\citep{tastan2026stochastic}        & 0.377 & \underline{0.688} & 0.712 & 0.441 \\
CORAL~\citep{qu2026coral}                   & 0.409 & 0.505 & 0.458 & 0.511 \\
\midrule

\multicolumn{5}{l}{\textbf{Topology-capability co-evolution}} \\
\textbf{\methodname{}}
& \textbf{0.767}
& \textbf{0.745}
& \textbf{0.887}
& \textbf{0.824} \\
\quad Improvement
& \textbf{+22.8}
& \textbf{+5.7}
& \textbf{+7.5}
& \textbf{+17.3} \\
\bottomrule
\end{tabular}
\end{table*}

\subsection{Main Results}
\label{sec:main-results}

Table~\ref{tab:main-results} compares \methodname{} with 20 multi-agent
baselines on four benchmarks. \methodname{} achieves the best accuracy
across all datasets, outperforming offline-evolved workflows,
per-instance graph design methods, and fixed-topology MAS baselines. These results suggest that pre-optimized workflows, topology-only adaptation, and manually fixed collaboration patterns are often insufficient for diverse test-time tasks. Compared with within-instance evolution methods, \methodname{} also achieves stronger performance, showing that jointly
evolving agent capabilities and communication topology is especially
useful for tasks whose optimal topology and agent skills change across
different stages of inference.

\subsection{Analysis of \methodname{}}
\label{sec:exp_analysis}

\begin{figure}[t]
\centering
\includegraphics[width=0.9\linewidth]{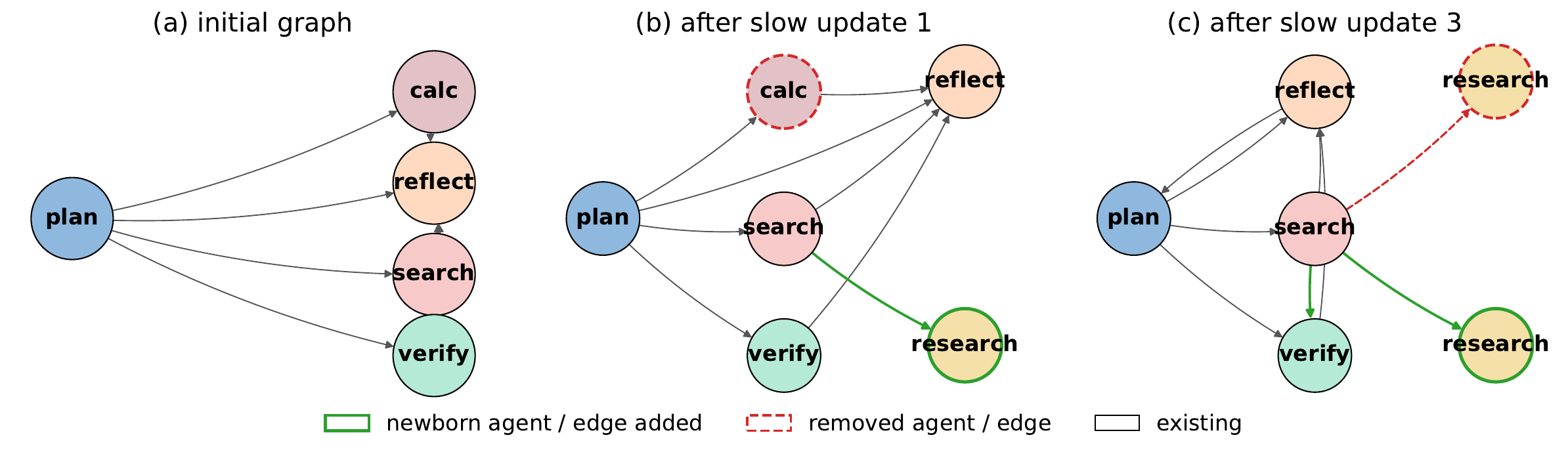}
\caption{
Representative evolution trace on a \texttt{finance} instance. The
initial graph evolves into a task-specific search-research-verify pipeline. For example, a ``link research'' is deleted while a ``data research'' is newly added after 3 rounds of slow update. 
}
\label{fig:evolution-trace}
\end{figure}

\paragraph{Evolution trajectory.}
Figure~\ref{fig:evolution-trace} shows that \methodname{} changes the
team organization during inference rather than only increasing compute.
The graph removes unhelpful roles, introduces missing capabilities, and
strengthens useful communication paths. This indicates that the
meta-LLM learns an instance-specific division of labor instead
of applying a fixed template (more cases in App.~\ref{app:more-evolution-traces}). 

\begin{figure}[t]
    \centering
    \includegraphics[width=0.9\linewidth]{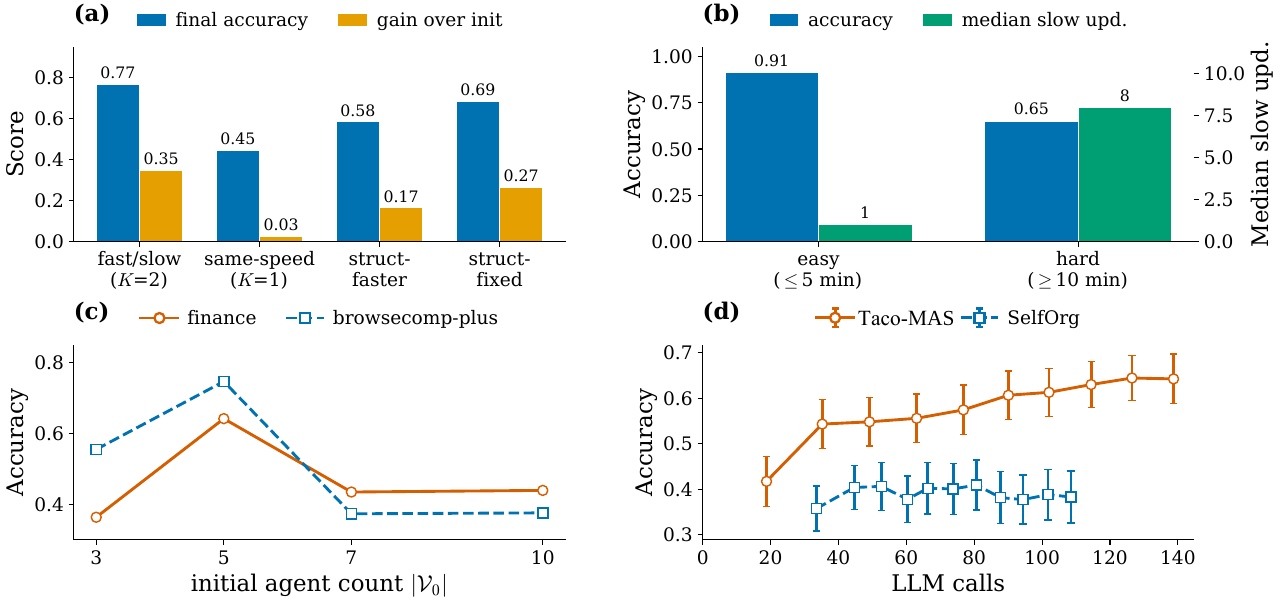}
    \caption{
    \textbf{(a)} Accuracy and gain under different fast/slow update schedules on \texttt{finance} benchmark.
    \textbf{(b)} Accuracy and median slow-update count by expert-time tier on \texttt{finance} benchmark.
    \textbf{(c)} Accuracy under different initial agent counts.
    \textbf{(d)} Accuracy \textit{w.r.t.} LLM calls under different models.
    }
    \label{fig:exp_analysis}
\end{figure}

\paragraph{Fast/slow update schedule.}
Figure~\ref{fig:exp_analysis}(a) shows that the fast/slow separation is
important. Updating topology too frequently makes the trajectory
unstable, while freezing topology prevents full specialization. The
default schedule works best because capabilities adapt quickly to local
failures, while topology changes more slowly to preserve coordination.

\paragraph{Difficulty-adaptive evolution.}
On the \texttt{finance} benchmark, we group instances by the provided
expert-time annotation and report the number of slow updates in
Figure~\ref{fig:exp_analysis}(b). \methodname{} spends more slow updates on
higher-expert-time instances, although this annotation is never observed by
the model. This supports that its extra computation is adaptively allocated
from trajectory feedback to achieve better performance in complex tasks, rather than uniformly increasing reasoning depth.

\paragraph{Computational cost analysis.}
We measure computational cost using $\calls$, defined as the mean number of LLM calls per instance, and compare \methodname{} with SelfOrg, the strongest within-instance evolution baseline, as shown in Figure~\ref{fig:exp_analysis}(d). The results show that increasing inference cost alone does not guarantee better performance:
SelfOrg requires many LLM calls but quickly reaches a performance plateau. In contrast, \methodname{} continues to improve as more evolution rounds are added, indicating that its additional inference budget is effectively converted into accuracy gains. This suggests that
\methodname{} has the potential to exhibit inference-time scaling
behavior, where performance can improve with increased test-time
computation.

\subsection{Robustness Analysis}
\label{sec:robust_analysis}

\begin{figure}[t]
\centering
\begin{subfigure}[t]{0.49\linewidth}
\centering
\begin{tikzpicture}
\begin{axis}[
  ybar,
  bar width=14pt,
  width=\linewidth,
  height=3.5cm,
  ymin=0, ymax=0.95,
  ylabel={Accuracy},
  xtick={1,2,3},
  xticklabels={{Gemini 2.5 \\ Flash-Lite}, Claude Haiku-3.5, GPT-4o mini},
  xticklabel style={font=\tiny, align=center},
  enlarge x limits=0.25,
  nodes near coords,
  nodes near coords style={font=\tiny},
  tick label style={font=\small},
  ylabel style={font=\small},
  legend style={at={(0.5,1.05)}, anchor=south, legend columns=2, draw=none, font=\scriptsize},
]
\addplot[fill=blue!25, draw=blue!60!black] coordinates {(1,0.417) (2,0.425) (3,0.442)};
\addplot[fill=blue!75, draw=blue!90!black] coordinates {(1,0.767) (2,0.731) (3,0.715)};
\legend{init., evolved}
\end{axis}
\end{tikzpicture}
\label{fig:bar-backbone}
\end{subfigure}
\hfill
\begin{subfigure}[t]{0.49\linewidth}
\centering
\begin{tikzpicture}
\begin{axis}[
  ybar,
  bar width=14pt,
  width=\linewidth,
  height=3.5cm,
  ymin=0, ymax=0.95,
  ylabel={Accuracy},
  xtick={1,2,3},
  xticklabels={{Gemini 2.5 \\ Pro}, {Gemini 2.5 \\ Flash}, {Gemini 2.5 \\ Flash-Lite}},
  xticklabel style={font=\tiny, align=center},
  enlarge x limits=0.25,
  nodes near coords,
  nodes near coords style={font=\tiny},
  tick label style={font=\small},
  ylabel style={font=\small},
  legend style={at={(0.5,1.05)}, anchor=south, legend columns=2, draw=none, font=\scriptsize},
]
\addplot[fill=blue!25, draw=blue!60!black] coordinates {(1,0.417) (2,0.387) (3,0.449)};
\addplot[fill=blue!75, draw=blue!90!black] coordinates {(1,0.767) (2,0.719) (3,0.718)};
\legend{init., evolved}
\end{axis}
\end{tikzpicture}
\label{fig:bar-meta}
\end{subfigure}
\caption{
Pre- vs post-evolution accuracy under (1) left: different
agent LLM backbones and (2) right: different meta-LLM backbones.
}
\label{fig:bar-init-final}
\end{figure}

\paragraph{Initial agent count.}
Figures~\ref{fig:exp_analysis}(c) show that performance is
non-monotonic in the initial team size. Too few agents limit role
diversity, while too many agents increase coordination complexity and
dilute the effect of each bounded topology edit. Meanwhile, inference
cost grows with team size. This suggests that a moderate initial team
offers the best balance between role diversity, coordination stability,
and compute.

\paragraph{Backbone robustness.}
Figure~\ref{fig:bar-init-final} shows that evolution improves
performance across both agent backbones and meta-LLM backbones.
Although different models produce different initial graphs and outputs,
the post-evolution gains remain consistent. This suggests that the main
benefit comes from the co-evolution process rather than a specific LLM
backbone.
\section{Conclusion}
\label{sec:conclusion}

We present \methodname, a test-time co-evolution framework for LLM-based multi-agent systems, which formulates inference as a two-time-scale online adaptation process that jointly refines agent capabilities rapidly and communication topology slowly within each task instance. 
Theoretically, we connect the framework to a replicator-mutator process and show that the joint dynamics can contract toward a task-conditioned stable region.
Empirically, \methodname{} achieves the best accuracy on all four benchmarks. Further analyses show that the fast/slow update schedule is crucial, and the amount of evolution is adaptively adjusted according to task difficulty.
Overall, our results suggest that inference-time computation in multi-agent LLM systems should not be conceived as a static forward pass, but a temporal process of co-evolution. Limitations are in the App.~\ref{sec:limitation}.


\bibliography{ref}

\newpage
\appendix
\section{Full Proofs}
\label{app:proofs}

We provide full proofs for the replicator--mutator analysis in
Sec.~\ref{sec:theory}.  Notation follows
Sec.~\ref{sec:method-formal}: at execution round $t$ the agent graph
is $\mathcal{G}_t = (T_t, \Phi_t)$ with topology
$T_t = (\mathcal{V}_t, \mathcal{E}_t)$ and capability collection
$\Phi_t = \{\phi_{v,t}\}_{v \in \mathcal{V}_t}$;\, the meta-judge
$\mathcal{J}$ produces contribution scores $c_{v,t}$ with team mean
$\bar c_t = m_t = \tfrac{1}{|\mathcal{V}_t|}\sum_u c_{u,t}$;\, $\eta>0$
is the replicator step;\, and $K \ge 1$ is the slow-update interval.
We assume $|\mathcal{V}_t| \le N_{\max}$ and
$|\mathcal{E}_t| \le N_{\max}^2$ throughout, so all relevant random
variables live on a finite state space.

\subsection{Population-Frequency View}
\label{app:freq-view}

Eq.~\eqref{eq:fast} treats $\phi_{v,t}$ as a positive scalar weight
that aggregates the agent's prompt, memory and tool inventory into a
single multiplicative influence.  For analysis we work with the
induced \emph{role-frequency vector}
\begin{equation}
\pi_{v,t}
\;=\;
\frac{\phi_{v,t}}{\sum_{u \in \mathcal{V}_t}\phi_{u,t}}
\;\in\; \Delta^{|\mathcal{V}_t|},
\label{eq:pi-from-phi}
\end{equation}
and the \emph{population-game expected fitness} of role $v$ at state
$(\pi, T)$:
\begin{equation}
f_v(\pi, T;q)
\;=\;
\mathbb{E}\!\left[c_{v,t} \,\bigm|\, \pi_t = \pi,\, T_t = T,\, q\right],
\quad
\bar f(\pi, T)
\;=\;
\sum_{v}\pi_v\, f_v(\pi, T; q).
\end{equation}
By Assumption~\ref{ass:fitness-bdd} the noise
$\zeta_{v,t} := c_{v,t} - f_v(\pi_t, T_t; q)$ is bounded,
$|\zeta_{v,t}| \le \epsilon$ a.s., and $f_v$ is bounded and Lipschitz
in $(\pi, T)$.

\subsection{Replicator Dynamics under a Frozen Topology}
\label{app:replicator-lyap}

Fix a topology $T$.  The continuous-time replicator equation
\begin{equation}
\dot{\pi}_v
\;=\;
\pi_v\bigl(f_v(\pi, T) - \bar f(\pi, T)\bigr),
\qquad v \in \mathcal{V},
\label{eq:cont-replicator}
\end{equation}
is a Shahshahani gradient ascent on the team-average fitness
$\bar f$ \citep{akin1979geometry,hofbauer1998evolutionary}: along
the flow,
\begin{equation}
\frac{d}{dt}\bar f(\pi(t), T)
\;=\;
\sum_{v}\pi_v(f_v - \bar f)^{2}
\;=\;
\mathrm{Var}_{\pi}(f)
\;\ge\; 0,
\label{eq:lyap-cont}
\end{equation}
so $\bar f$ is a Lyapunov function for the continuous flow.

\paragraph{Discrete-time approximation.}
Substituting~\eqref{eq:pi-from-phi} into~\eqref{eq:fast} and using the
fact that the empirical mean $\bar c_t$ in the exponent cancels in the
ratio gives the equivalent normalized form
\begin{equation}
\pi_{v,t+1}
\;=\;
\frac{\pi_{v,t}\exp(\eta\, c_{v,t})}
     {\sum_{u}\pi_{u,t}\exp(\eta\, c_{u,t})}.
\label{eq:disc-replicator}
\end{equation}
A second-order Taylor expansion of $\exp$ in $\eta$ yields, with
$\langle c \rangle_{\pi_t} := \sum_u \pi_{u,t} c_{u,t}$,
\begin{equation}
\pi_{v,t+1}
\;=\;
\pi_{v,t}
\;+\;
\eta\,\pi_{v,t}\bigl(c_{v,t} - \langle c \rangle_{\pi_t}\bigr)
\;+\;
O(\eta^{2}).
\label{eq:disc-step}
\end{equation}
Substituting $c_{v,t} = f_v(\pi_t, T) + \zeta_{v,t}$, taking
expectation conditioned on $\mathcal{H}_t$ and using
$|\zeta_{v,t}| \le \epsilon$ together with the Lipschitz continuity
of $\bar f$, we obtain the discrete-time fitness ascent
\begin{equation}
\mathbb{E}\!\left[\bar f(\pi_{t+1}, T)\,\bigm|\,\mathcal{H}_t\right]
\;\ge\;
\bar f(\pi_t, T)
\;+\;
\eta\,\mathrm{Var}_{\pi_t}(f)
\;-\;
\eta\,\epsilon
\;-\;
O(\eta^{2}).
\label{eq:fast-fitness}
\end{equation}
Dropping the nonnegative variance term yields the weaker but
sufficient monotone bound
\begin{equation}
\mathbb{E}\!\left[\bar f(\pi_{t+1}, T)\,\bigm|\,\mathcal{H}_t\right]
\;\ge\;
\bar f(\pi_t, T) - \eta\,\epsilon - O(\eta^{2}).
\label{eq:fast-fitness-weak}
\end{equation}

\subsection{Lyapunov Function Construction}
\label{app:lyap-fn}

Fix the query $q$.  Let
\[
\mathcal{A}(q)
\;=\;
\bigl\{(\pi^{\ast}, T^{\ast}) :
\pi^{\ast}\text{ is a local maximizer of } \bar f(\,\cdot\,, T^{\ast})\bigr\}
\]
denote the set of locally optimal configurations under query $q$.
Define the topology distance
\begin{equation}
d(T, \mathcal{A})
\;=\;
\min_{(\pi^{\ast}, T^{\ast}) \in \mathcal{A}}
\bigl(|\mathcal{V} \triangle \mathcal{V}^{\ast}|
+ |\mathcal{E} \triangle \mathcal{E}^{\ast}|\bigr),
\end{equation}
the per-topology fitness ceiling
\begin{equation}
\bar f^{\ast}(T)
\;=\;
\max_{\pi'} \bar f(\pi', T),
\end{equation}
and the joint Lyapunov function
\begin{equation}
L(\Phi, T)
\;=\;
d(T, \mathcal{A})
\;+\;
\eta\bigl(\bar f^{\ast}(T) - \bar f(\pi, T)\bigr),
\label{eq:lyap-def}
\end{equation}
where $\pi$ is the role frequency induced by $\Phi$ via
Eq.~\eqref{eq:pi-from-phi}.  Both summands are bounded:
$d(T, \mathcal{A}) \le 2(N_{\max} + N_{\max}^{2})$ since
$|\mathcal{V}|, |\mathcal{E}|$ are bounded, and
$\bar f^{\ast}(T) - \bar f(\pi, T) \in [0, M]$ for some $M < \infty$
by Assumption~\ref{ass:fitness-bdd}.  Hence $L$ is bounded.

\subsection{Proof of Proposition~\ref{prop:fast-monotone}}

\begin{proof}
The topology is fixed inside one fast step, so~\eqref{eq:fast-fitness-weak}
applied at $T = T_t$ gives
\[
\mathbb{E}\!\left[\bar f_{t+1}\,\bigm|\,\mathcal{H}_t\right]
\;\ge\;
\bar f_t - \eta\,\epsilon - O(\eta^{2}),
\]
which yields the stated bound for $\eta$ small enough that
$O(\eta^{2}) \le \eta\epsilon$ (the regime of interest).
The ``biased toward improvement when contribution variance is nonzero''
clause follows from the stronger~\eqref{eq:fast-fitness}: the
$\eta\,\mathrm{Var}_{\pi_t}(f)$ term is strictly positive whenever the
expected fitnesses $\{f_v\}$ differ across the active agents.
\end{proof}

\subsection{Proof of Theorem~\ref{thm:joint}}

\begin{proof}
Consider one slow-update cycle: $K$ fast steps followed by one
topology update.  Write
$\pi^{\flat}_{t} := \pi_{t+K^{-}}$ for the role frequency just before
the slow update; the topology is unchanged during the fast phase, so
$T^{\flat}_t = T_t$.

\paragraph{Fast phase.}
Iterating~\eqref{eq:fast-fitness} for $K$ steps and using
$\mathrm{Var}_{\pi_t}(f) \ge 0$ throughout,
\begin{equation}
\mathbb{E}\!\left[\bar f^{\ast}(T_t) - \bar f(\pi^{\flat}_t, T_t)
\,\bigm|\, \mathcal{H}_t\right]
\;\le\;
\bigl(\bar f^{\ast}(T_t) - \bar f(\pi_t, T_t)\bigr)
\;+\;
K\eta\epsilon + O(K\eta^{2}).
\label{eq:fast-gap-bd}
\end{equation}
Because $T$ is fixed during the fast phase,
$d(T^{\flat}_t, \mathcal{A}) = d(T_t, \mathcal{A})$, so the Lyapunov
update over the fast phase satisfies
\begin{equation}
\mathbb{E}\!\left[L(\Phi^{\flat}_t, T_t)\,\bigm|\,\mathcal{H}_t\right]
\;\le\;
L(\Phi_t, T_t)
\;+\;
K\eta^{2}\epsilon
\;+\;
O(K\eta^{3}).
\label{eq:fast-Lbound}
\end{equation}

\paragraph{Slow phase.}
By Assumption~\ref{ass:meta-bias} the slow update obeys the edit
budget $|\Delta\mathcal{V}|\le B_{\mathcal{V}}$,
$|\Delta\mathcal{E}|\le B_{\mathcal{E}}$ (Eq.~\ref{eq:slow}), and
each edit improves the best achievable team contribution under the
topology with probability $p > 1/2$.  Each edit changes
$d(T, \mathcal{A})$ by at most $1$, so a standard biased
random-walk argument gives, with $\gamma := 2p - 1 > 0$,
\begin{equation}
\mathbb{E}\!\left[d(T_{t+K}, \mathcal{A})\,\bigm|\,\mathcal{H}_{t+K^{-}}\right]
\;\le\;
(1 - \gamma)\, d(T_t, \mathcal{A})
\;+\;
\epsilon_{\mathrm{meta}},
\label{eq:slow-d-bound}
\end{equation}
where $\epsilon_{\mathrm{meta}} \le B_{\mathcal{V}} + B_{\mathcal{E}}$
absorbs the bounded slack from edits that fail to decrease
$d(T,\mathcal{A})$.  By Lipschitz continuity of $\bar f^{\ast}$ in
$T$ on the finite topology lattice (a consequence of bounded $f$),
there is a constant $C > 0$ such that
\begin{equation}
\bigl|\bar f^{\ast}(T_{t+K}) - \bar f^{\ast}(T_t)\bigr|
\;\le\;
C\,(B_{\mathcal{V}} + B_{\mathcal{E}}),
\label{eq:f-star-lip}
\end{equation}
and similarly the projection of $\pi^{\flat}_t$ onto the new topology
$T_{t+K}$ changes $\bar f(\pi, T)$ by at most a constant multiple of
$B_{\mathcal{V}} + B_{\mathcal{E}}$.

\paragraph{Combined cycle bound.}
Combining~\eqref{eq:fast-Lbound}, \eqref{eq:slow-d-bound},
and~\eqref{eq:f-star-lip},
\begin{equation}
\mathbb{E}\!\left[L(\Phi_{t+K}, T_{t+K})\,\bigm|\,\mathcal{H}_t\right]
\;\le\;
(1 - \gamma)\, L(\Phi_t, T_t)
\;+\;
\tilde{\epsilon},
\label{eq:joint-contraction}
\end{equation}
where the noise term collects the contribution-score noise, the
meta-controller slack, the welfare-gap drift across the topology
edit, and the discretization error:
\[
\tilde{\epsilon}
\;=\;
\epsilon_{\mathrm{meta}}
\;+\;
K\eta^{2}\epsilon
\;+\;
\eta\,C\,(B_{\mathcal{V}} + B_{\mathcal{E}})
\;+\;
\gamma\,\eta\,M
\;+\;
O(K\eta^{3}),
\]
with $M$ the boundedness constant of the welfare gap from
\S\ref{app:lyap-fn}.

\paragraph{Iteration and convergence rate.}
Unrolling~\eqref{eq:joint-contraction} for $N$ slow cycles,
\begin{equation}
\mathbb{E}\!\left[L(\Phi_{NK}, T_{NK})\right]
\;\le\;
(1 - \gamma)^{N}\, L(\Phi_0, T_0)
\;+\;
\tilde{\epsilon}\sum_{i=0}^{N-1}(1-\gamma)^{i}
\;\le\;
(1 - \gamma)^{N}\, L_0
\;+\;
\frac{\tilde{\epsilon}}{\gamma}.
\label{eq:joint-iter}
\end{equation}
To reach $\eta\epsilon$-accuracy on the transient term we need
$(1-\gamma)^{N}L_0 \le \eta\epsilon$.  Using
$\log(1-\gamma) \le -\gamma$ this yields
\begin{equation}
N
\;=\;
O\!\left(\frac{1}{\gamma}\,\log\frac{L_0}{\eta\,\epsilon}\right).
\end{equation}
Recasting~\eqref{eq:joint-contraction} as the per-step contraction
stated in Theorem~\ref{thm:joint} (with the slow update absorbed into
a single round increment) completes the proof.
\end{proof}

\subsection{Two-Time-Scale Justification}

The above analysis is consistent with the standard two-time-scale
stochastic approximation framework
\citep{borkar1997stochastic,kushneryin2003}: for $K$ sufficiently
large, the fast dynamics tracks a quasi-stationary distribution
under the frozen topology before each slow update, and the slow
update operates on the time-averaged behaviour of these fast
trajectories.  The fast-phase Lyapunov increment
in~\eqref{eq:fast-Lbound} formalizes this quasi-stationarity in our
setting: each fast step changes $L$ by at most $O(\eta^{2}\epsilon)$
in expectation, so the cumulative drift over $K$ steps remains
$O(K\eta^{2}\epsilon) \to 0$ in the joint limit
$\eta \to 0$, $K \to \infty$, $K\eta = \mathrm{const}$.

\section{Limitations}
\label{sec:limitation}
Our current evolution trace is driven by a single meta-controller LLM,
which observes the full execution trajectory and proposes both capability
and topology updates. This design keeps the evolution process simple and
centralized, but it may become a bottleneck for very long or highly
decomposable instances. When a task naturally splits into multiple
sub-task clusters, a single controller may have to summarize too much
local information and may miss fine-grained coordination failures.
One possible extension is to use parallel meta-controllers, each
responsible for a sub-task cluster or a local region of the agent graph,
with a higher-level controller coordinating their proposed edits. Such a
hierarchical design may improve scalability while preserving global
consistency.

Our current implementation also clears the scratch memory $M_t$ between
instances. This avoids leaking task-specific content across queries and
keeps each test instance independent, but it prevents the system from
reusing useful structural discoveries. For example, the system may
repeatedly rediscover similar role decompositions, verifier--searcher
communication patterns, or tool-use strategies across related tasks. A
careful cross-instance memory design could store reusable structural
knowledge without retaining sensitive or instance-specific content.
Developing such memory mechanisms, together with safeguards against
spurious transfer and privacy leakage, is an important direction for
future work.

\section{Additional Experimental Details and Results}
\label{app:additional}

\subsection{Baseline Details}
\label{app:baselines}

We compare with 20 baselines grouped by adaptation time scale. All methods use the same dataset-specific tools and base LLM unless stated otherwise.

\paragraph{Fixed-topology~\citep{kim2025towards}.}
SAS uses one tool-augmented agent. MAS-Independent runs multiple agents independently and aggregates by voting. MAS-Decentralized uses a complete communication graph. MAS-Centralized uses a leader-worker structure. MAS-Hybrid uses a leader with worker sub-clusters.

\paragraph{Offline-evolved.}
These methods optimize workflow or agent design before deployment and freeze it at inference: MetaGPT~\citep{hong2023metagpt}, AFlow~\citep{zhang2024aflow}, AgentSquare~\citep{shang2024agentsquare}, EvoAgentx~\citep{wang2025evoagentx}, and ADAS~\citep{hu2024automated}.

\paragraph{Per-instance graph design.}
These methods generate or select a graph for each query, then keep it fixed during execution: AgentVerse~\citep{chen2023agentverse}, ARG-Designer~\citep{li2026assemble}, MaAS~\citep{zhang2025multi}, MetaAgent~\citep{zhang2025metaagent}, SwarmAgentic~\citep{zhang2025swarmagentic}, MetaGen~\citep{wang2026metagen}, and EvolveRouter~\citep{huang2026evolverouter}.

\paragraph{Within-instance evolution.}
These methods adapt during inference but only along one axis. ChatDev-Puppeteer~\citep{qian2024chatdev} selects among fixed personas. SelfOrg~\citep{tastan2026stochastic} rewires a communication DAG while keeping prompts fixed. CORAL~\citep{qu2026coral} updates memory and skills while leaving topology implicit. \methodname{} jointly evolves both topology and capabilities.

\subsection{Evolution Traces on Other Datasets}
\label{app:more-evolution-traces}

Figures~\ref{fig:app-evo-browsecomp}--\ref{fig:app-evo-workbench} show representative traces on the remaining datasets. Each trace shows the initial centralized graph, the graph after one slow update, and the graph after three slow updates or convergence. Green marks additions; red dashed marks removals.

\begin{figure}[h]
\centering
\includegraphics[width=\linewidth]{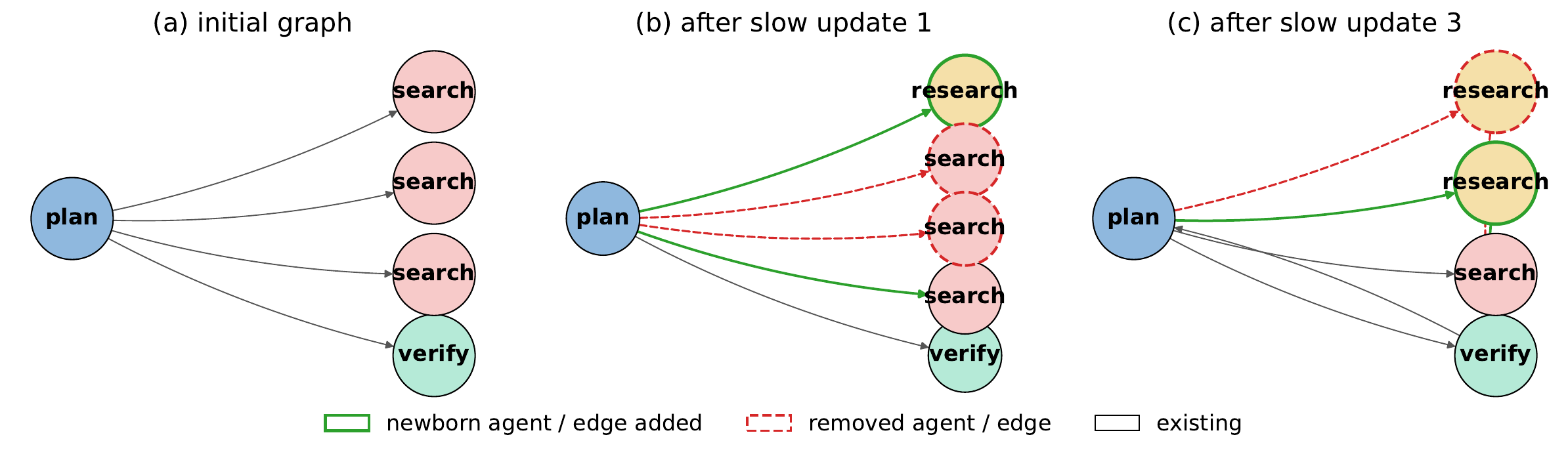}
\caption{\texttt{browsecomp-plus} trace.}
\label{fig:app-evo-browsecomp}
\end{figure}

\begin{figure}[h]
\centering
\includegraphics[width=\linewidth]{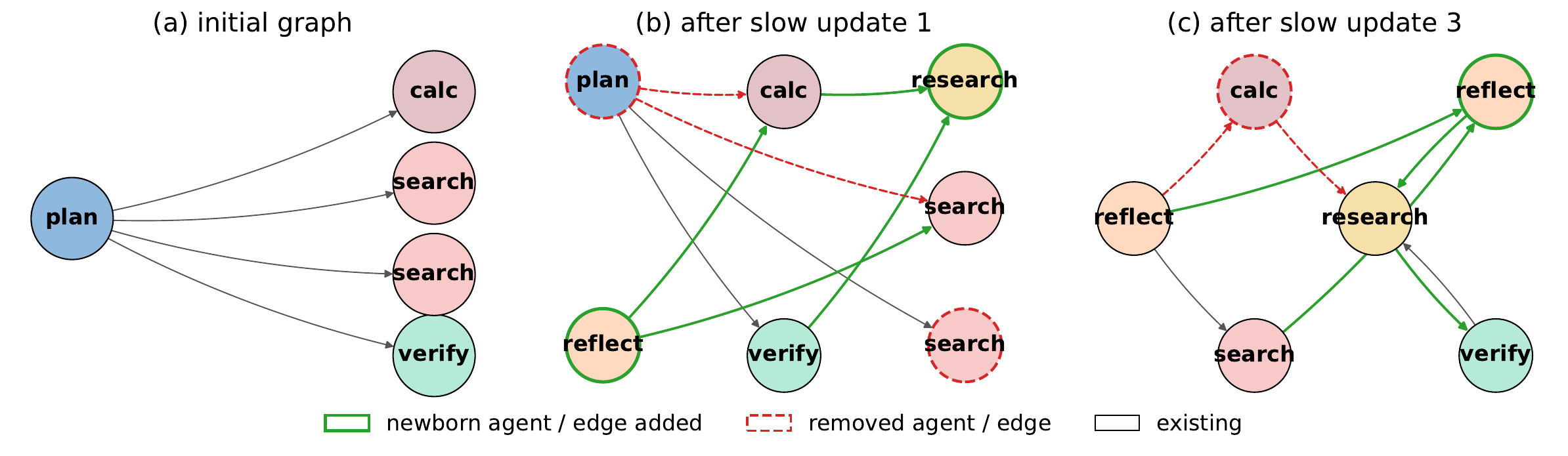}
\caption{\texttt{plancraft} trace. }
\label{fig:app-evo-plancraft}
\end{figure}

\begin{figure}[h]
\centering
\includegraphics[width=\linewidth]{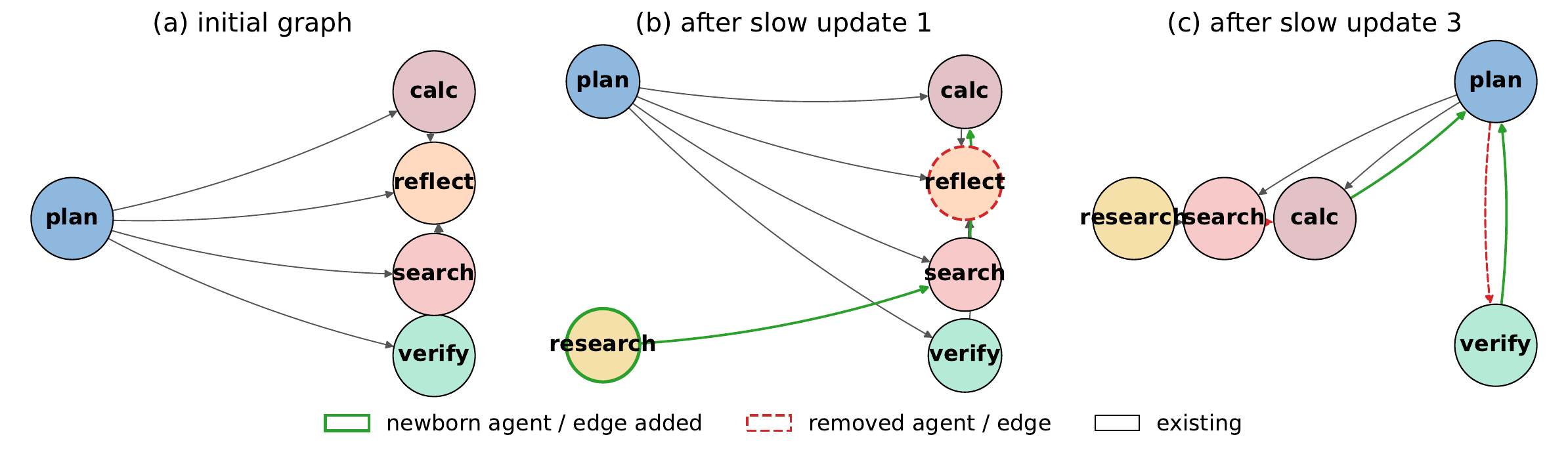}
\caption{\texttt{workbench} trace. }
\label{fig:app-evo-workbench}
\end{figure}

\subsection{Assumption Verification}
\label{app:assump_verify}

Assumption~\ref{ass:llm-bias} states that, conditional on the
round-$t$ history $\mathcal{F}_t$, the meta-LLM rewrite should not
decrease the expected manipulation intensity, up to a small noise term:
$\mathbb{E}[m_{t+1}\mid\mathcal{F}_t]\ge
m_t-\eta\epsilon_{\textrm{noise}}$. We therefore measure the
round-to-round change $\Delta m_t=m_{t+1}-m_t$. The sample mean of
$\Delta m_t$ over all consecutive round pairs provides a direct
empirical estimate of this expected gap. Under the assumption, this
mean should be approximately non-negative. Moreover, if the meta-LLM
approximately follows the discrete replicator update in
Eq.~\eqref{eq:fast}, larger within-round contribution variance should
lead to stronger positive increments.

\begin{figure}
    \centering
    \includegraphics[width=\linewidth]{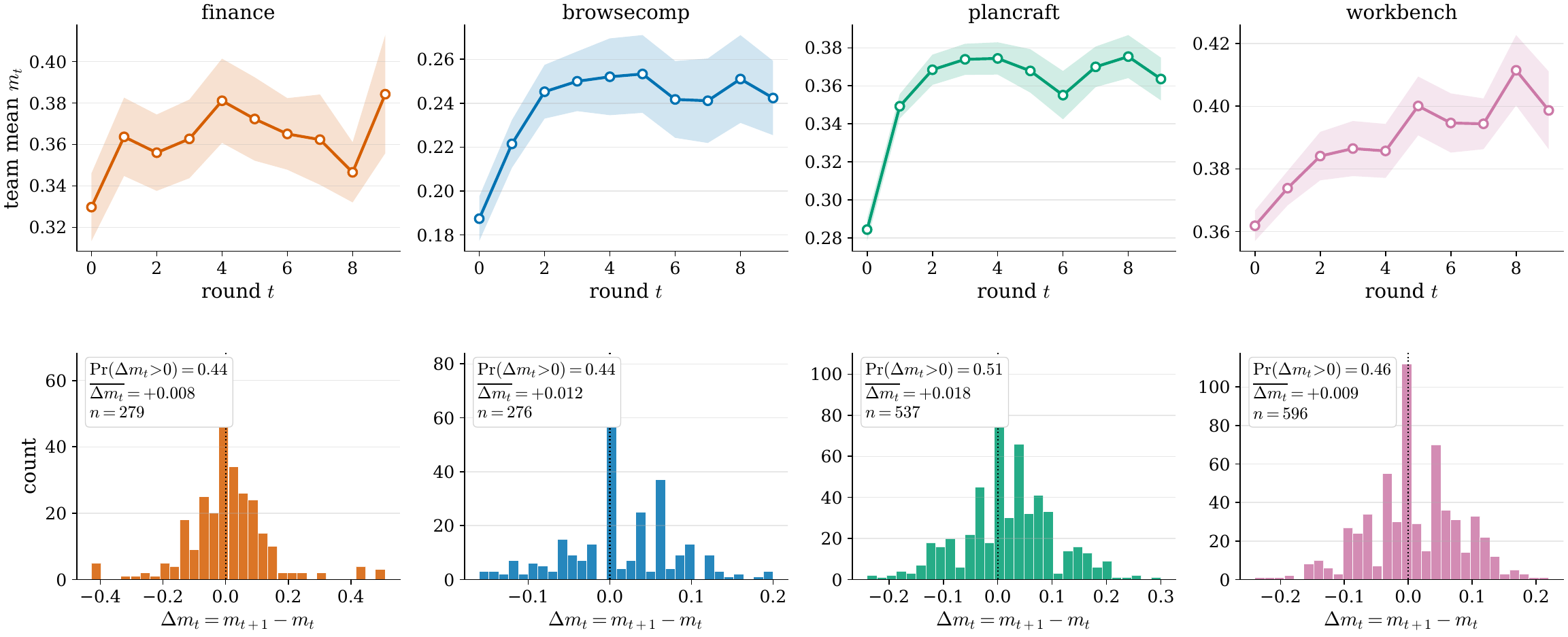}
    \caption{
    Empirical check of the replicator-bias assumption
    (Eq.~\ref{eq:llm-bias}). The top row shows the average team
    contribution $m_t$ across fast rounds, with shaded bands denoting SEM;
    $m_t$ generally increases in the early rounds and then plateaus. The
    bottom row shows the distribution of round-to-round changes
    $\Delta m_t=m_{t+1}-m_t$, whose mean is positive across all benchmarks,
    supporting the predicted non-decreasing trend.
    }
    \label{fig:replicator-regression}
\end{figure}

\paragraph{Result.}
Across all datasets, the average round-to-round increment in
manipulation intensity is positive, supporting the non-decreasing trend
predicted by Assumption~\ref{ass:llm-bias}. The probability of observing
a positive increment is close to one half, with retrieval-heavy datasets
showing slightly more noisy fluctuations and \texttt{plancraft} showing
a somewhat clearer upward tendency. This suggests that negative
increments are mainly due to contribution-score noise rather than a
systematic decline in manipulation intensity. The trajectory plots in
the top row of Figure~\ref{fig:replicator-regression} further show that
$m_t$ typically increases during the early rounds and then gradually
plateaus around the dataset-level mean contribution score, consistent
with the fast phase approaching its within-topology equilibrium before
the slow phase is triggered.



\subsection{Slow-Update Counts}
Figure~\ref{fig:rounds-dist} shows the distribution of slow topology
updates used by \methodname{} on different benchmarks. Easier datasets,
such as \texttt{plancraft} and \texttt{workbench}, usually require no or
only one slow update before termination, indicating that the initial
topology and fast capability refinement are often sufficient. In
contrast, harder datasets, especially \texttt{finance}, more frequently
consume the full slow-update budget. This suggests that complex tasks
require repeated structural reconfiguration, as the system needs to
adapt its communication topology and agent roles across multiple stages
of inference.

\begin{figure}[H]
\centering
\begin{tikzpicture}
\begin{axis}[
  ybar=0pt,
  bar width=4.5pt,
  enlarge x limits=0.06,
  width=0.9\linewidth,
  height=5.0cm,
  ymin=0, ymax=0.65,
  ylabel={Fraction of instances},
  xlabel={Slow updates per instance},
  symbolic x coords={0,1,2,3,4,5,6,7,8,9,10},
  xtick=data,
  legend style={at={(0.5,1.04)}, anchor=south, legend columns=4, draw=none, font=\small},
  tick label style={font=\scriptsize},
  ylabel style={font=\small},
  xlabel style={font=\small},
]
\addplot[fill=blue!25, draw=blue!60!black] coordinates {(0,0.653) (1,0.181) (2,0.050) (3,0.017) (4,0.012) (5,0.009) (6,0.002) (7,0.009) (8,0.003) (9,0.002) (10,0.062)};
\addplot[fill=green!25, draw=green!60!black] coordinates {(0,0.586) (1,0.228) (2,0.071) (3,0.023) (4,0.025) (5,0.013) (6,0.010) (7,0.006) (8,0.003) (9,0.006) (10,0.030)};
\addplot[fill=orange!25, draw=orange!70!black] coordinates {(0,0.430) (1,0.140) (2,0.030) (3,0.080) (4,0.040) (5,0.040) (6,0.020) (7,0.010) (8,0.000) (9,0.000) (10,0.210)};
\addplot[fill=red!25, draw=red!70!black] coordinates {(0,0.240) (1,0.120) (2,0.040) (3,0.020) (4,0.040) (5,0.020) (6,0.000) (7,0.000) (8,0.060) (9,0.040) (10,0.420)};
\legend{plancraft, workbench, browsecomp, finance}
\end{axis}
\end{tikzpicture}
\caption{Distribution of slow-update counts. Easier datasets usually terminate with few slow updates, while harder datasets more often consume the full budget.}
\label{fig:rounds-dist}
\end{figure}

\subsection{Fine-Grained Breakdown}
\label{app:fine-grained}

Table~\ref{tab:fine-grained} reports per-subcategory accuracy and median slow-update count. Subcategories come from dataset metadata: expert time for \texttt{finance}, question type for \texttt{workbench}, answer pattern for \texttt{plancraft}, and expected-answer length for \texttt{browsecomp-plus}. Across datasets, harder subcategories generally require more slow updates.

\begin{table}[h]
\centering
\small
\caption{Per-subcategory accuracy and median slow-update count.}
\label{tab:fine-grained}
\setlength{\tabcolsep}{5pt}
\begin{tabular}{llccc}
\toprule
Dataset & Subcategory & $n$ & mean $\acc$ & median slow upd. \\
\midrule
\texttt{finance} & easy ($\texttt{expert\_mins}\leq5$) & 22 & 0.914$^\dagger$ & 1 \\
\texttt{finance} & hard ($\texttt{expert\_mins}\geq10$) & 28 & 0.651$^\dagger$ & 10 \\
\midrule
\texttt{plancraft} & feasible & 480 & 0.874 & 0 \\
\texttt{plancraft} & impossible & 100 & 0.940 & 0 \\
\midrule
\texttt{workbench} & single: analytics & 120 & 0.838 & 1 \\
\texttt{workbench} & single: calendar & 110 & 0.764 & 0 \\
\texttt{workbench} & single: email & 90 & 0.792 & 0 \\
\texttt{workbench} & single: crm & 80 & 0.681 & 1 \\
\texttt{workbench} & single: project\_mgmt & 80 & 0.825 & 0 \\
\texttt{workbench} & two-tool composite & 150 & 0.907 & 0 \\
\texttt{workbench} & multi-tool ($\geq3$) & 60 & 0.933 & 0 \\
\midrule
\texttt{browsecomp-pl.} & short answer ($\leq10$ chr) & 38 & 0.803 & 0 \\
\texttt{browsecomp-pl.} & medium answer ($11$--$25$) & 45 & 0.733 & 2 \\
\texttt{browsecomp-pl.} & long answer ($>25$) & 17 & 0.647 & 4 \\
\bottomrule
\end{tabular}
\end{table}

\subsection{Stop Reasons}
\label{app:stop-reasons}

Figure~\ref{fig:stop-reasons} reports whether instances stop by reaching the answer-quality threshold or by exhausting the round budget. The budget-exhausted fraction increases with dataset difficulty, suggesting that the controller's own stopping behavior provides an unsupervised difficulty signal.

\begin{figure}[h]
\centering
\begin{tikzpicture}
\begin{axis}[
  ybar stacked,
  bar width=22pt,
  width=0.8\linewidth,
  height=4.8cm,
  ymin=0, ymax=1.0,
  ylabel={Fraction of instances},
  symbolic x coords={workbench, plancraft, browsecomp, finance},
  xtick=data,
  enlarge x limits=0.18,
  legend style={at={(0.5,1.05)}, anchor=south, legend columns=2, draw=none, font=\small},
  tick label style={font=\small},
]
\addplot[fill=blue!25, draw=blue!60!black]
  coordinates {(workbench,0.971) (plancraft,0.936) (browsecomp,0.790) (finance,0.520)};
\addplot[fill=blue!75, draw=blue!60!black]
  coordinates {(workbench,0.029) (plancraft,0.064) (browsecomp,0.210) (finance,0.480)};
\legend{quality threshold reached, budget exhausted}
\end{axis}
\end{tikzpicture}
\caption{Stop-reason distribution. Harder datasets have a larger budget-exhausted tail.}
\label{fig:stop-reasons}
\end{figure}

\subsection{Graph Densification}
\label{app:densification}

\methodname{} edits both nodes and edges. Figure~\ref{fig:graph-growth} shows that node count stays relatively stable, while edge count grows consistently. Thus, the slow loop mainly rewires existing agents rather than spawning many new ones, supporting the bounded-edit design.

\begin{figure}[h]
\centering
\ref{graph-growth-legend}\\[2pt]
\begin{subfigure}[t]{0.49\linewidth}
\centering
\begin{tikzpicture}
\begin{axis}[
  width=\linewidth,
  height=5cm,
  xlabel={Slow-update step},
  ylabel={Avg.\ node count},
  legend to name=graph-growth-legend,
  legend columns=4,
  legend style={font=\small, draw=none, column sep=8pt},
  tick label style={font=\small},
  ylabel style={font=\small},
  xlabel style={font=\small},
  xmin=1, xmax=10,
  ymin=4.8, ymax=6.0,
  grid=major, grid style={gray!25, dashed},
]
\addplot coordinates {(1,5.00) (2,5.03) (3,5.07) (4,5.10) (5,5.15) (6,5.12) (7,5.12) (8,5.12) (9,5.14) (10,5.14)};
\addplot coordinates {(1,5.00) (2,4.91) (3,4.88) (4,5.00) (5,4.96) (6,5.04) (7,5.00) (8,4.95) (9,5.02) (10,5.05)};
\addplot coordinates {(1,5.00) (2,5.02) (3,5.00) (4,4.98) (5,4.98) (6,5.02) (7,5.02) (8,5.05) (9,5.04) (10,5.06)};
\addplot coordinates {(1,5.00) (2,5.19) (3,5.34) (4,5.34) (5,5.40) (6,5.53) (7,5.58) (8,5.70) (9,5.79) (10,5.86)};
\legend{finance, browsecomp, plancraft, workbench}
\end{axis}
\end{tikzpicture}
\caption{Node count.}
\label{fig:node-growth}
\end{subfigure}
\hfill
\begin{subfigure}[t]{0.49\linewidth}
\centering
\begin{tikzpicture}
\begin{axis}[
  width=\linewidth,
  height=5cm,
  xlabel={Slow-update step},
  ylabel={Avg.\ edge count},
  tick label style={font=\small},
  ylabel style={font=\small},
  xlabel style={font=\small},
  xmin=1, xmax=10,
  ymin=4.8, ymax=9.5,
  grid=major, grid style={gray!25, dashed},
]
\addplot coordinates {(1,5.00) (2,5.40) (3,5.80) (4,6.10) (5,6.40) (6,6.60) (7,6.80) (8,6.95) (9,7.02) (10,7.10)};
\addplot coordinates {(1,6.00) (2,6.10) (3,6.20) (4,6.30) (5,6.35) (6,6.40) (7,6.45) (8,6.48) (9,6.50) (10,6.52)};
\addplot coordinates {(1,6.00) (2,6.40) (3,6.80) (4,7.15) (5,7.40) (6,7.60) (7,7.75) (8,7.88) (9,7.95) (10,8.03)};
\addplot coordinates {(1,6.00) (2,6.60) (3,7.20) (4,7.70) (5,8.10) (6,8.40) (7,8.65) (8,8.90) (9,9.10) (10,9.24)};
\end{axis}
\end{tikzpicture}
\caption{Edge count.}
\label{fig:edge-growth}
\end{subfigure}
\caption{Average graph size over slow-update steps. Edge counts grow more consistently than node counts, indicating that the slow loop primarily performs rewiring.}
\label{fig:graph-growth}
\end{figure}

\subsection{Case Study: \texttt{finance} Instance 17}
\label{app:case-study-finance17}

We trace one hard \texttt{finance} case asking whether Workday reports a gross or net retention metric. The initial graph fails because retrieval repeatedly cites secondary sources instead of primary filings. Across slow updates, the meta-controller removes irrelevant or failing retrieval-side agents and introduces increasingly specialized search roles, eventually finding the 10-K disclosure and enabling the verifier to accept the answer.

\begin{table}[h]
\centering
\scriptsize
\caption{Meta-controller decisions on \texttt{FIN0017}.}
\label{tab:case-study-fin17}
\setlength{\tabcolsep}{4pt}
\begin{tabular}{p{0.05\linewidth}p{0.36\linewidth}p{0.26\linewidth}p{0.25\linewidth}}
\toprule
Step & Rationale & Birth/death & Edge edits \\
\midrule
1 & Retrieval bottleneck; calculator irrelevant. & $-$ calculator; $+$ researcher & re-type searcher$\to$verifier \\
2 & Searcher stagnates; evidence not reaching verifier. & $-$ searcher; $+$ researcher\#2 & add researcher$\to$verifier \\
3 & Researchers cite secondary sources. & $-$ researcher\#2; $+$ primary-filing searcher & add planner$\to$reflector \\
4 & Need stricter 10-K/10-Q retrieval. & $-$ searcher'; $+$ 10-K/10-Q searcher & none \\
\bottomrule
\end{tabular}
\end{table}

This case illustrates topology-capability coupling: the controller repeatedly diagnoses the same retrieval bottleneck but escalates specialization rather than applying unrelated edits.

\subsection{Contrast Case: \texttt{finance} Instance 8}
\label{app:case-study-finance8}

We also include an easier finance case asking how many basis points MU beat or missed its Q3 2024 GAAP gross-margin guidance. The initial graph fails to retrieve the guidance number, but a single slow update removes the irrelevant calculator and adds a researcher specialized in extracting named figures from filings. The next round retrieves the relevant actual and guidance margins, computes the difference, and stops. This contrast shows that \methodname{} spends evolution budget only where the instance requires it: some tasks need repeated specialization, while others are solved after one targeted structural edit.
\section{All Prompts}
\label{app:prompts}

This appendix lists every prompt used by \methodname\ verbatim. Curly
braces denote Python-style format slots filled in at runtime.

\subsection{Meta-LLM Prompt}

The meta-LLM is invoked once per slow update. Its input bundles
the task profile, current graph, per-agent fast-round summaries, and
global round scores; its output is a structured JSON object.

\begin{lstlisting}[language=prompt, caption={Meta-LLM system prompt.}, label={lst:meta-system}]
# System
You are the meta-controller of a multi-agent LLM system that solves
a single task instance by evolving its topology and the capabilities
of its agents at inference time.

On every invocation ("slow update"), you receive:
  1. the task description and a running task profile,
  2. the current agent graph as JSON,
  3. fast-round traces per agent (tool calls, critiques, round reward),
  4. the most recent global round scores.

You output a structured JSON object with the following fields:
  - birth_death_pairs: at most 2 pairs (v_dead, v_new_spec)
  - graph_edit: at most 4 (edge_add | edge_remove) operations
  - graph_diff: the implied graph delta, for provenance
  - agent_feedback: per-agent capability deltas (prompt edits, memory seeds)
  - global_rationale: short free-text reasoning, <=3 sentences
  - time_control: one of {continue, slow_again, stop}

Hard constraints:
  - do NOT delete the sink agent;
  - do NOT spawn >2 agents per slow update;
  - do NOT edit >4 edges per slow update;
  - prefer edits that re-route evidence over edits that spawn new roles;
  - if round scores are monotone-improving, choose time_control=continue.
\end{lstlisting}

\begin{lstlisting}[language=prompt, caption={Meta-LLM developer prompt.}, label={lst:meta-dev}]
# Developer
Follow this procedure:
  1. Read the task profile and identify the missing evidence slots.
  2. Inspect per-agent reward trajectories. Mark agents with
     reward < 0.2 for possible death; mark evidence slots with no
     assigned agent for possible birth.
  3. If a single agent is handling >2 distinct sub-goals, propose
     splitting via a birth/death pair.
  4. Re-wire edges so each evidence slot has a clear producer ->
     consumer path to the sink.
  5. Emit the JSON output.
\end{lstlisting}

\begin{lstlisting}[language=prompt, caption={Meta-LLM user payload (runtime format).}, label={lst:meta-user}]
# User
Task: {task_description}

Current graph:
{graph_json}

Per-agent fast-round summaries:
{per_agent_summaries}

Recent round scores (oldest to newest):
{round_scores}

Provide the JSON object now.
\end{lstlisting}

\subsection{Fast-round Agent Prompt}

Each agent runs with a role-specific profile embedded in the system
prompt. The fast-round reflection operator $\mathcal{F}_{\text{fast}}$
appends a self-critique block after each round.

\begin{lstlisting}[language=prompt, caption={Agent system prompt template.}, label={lst:agent-system}]
# System
You are the {role_name} agent.

Goal: {role_goal}
Constraint: {role_constraint}

You have access to the following tools:
{tool_specs}

Incoming messages (from upstream agents):
{incoming_messages}

Produce an output message that advances the task. If you are a sink
agent, end with a line beginning "Final Answer:".
\end{lstlisting}

\begin{lstlisting}[language=prompt, caption={Fast-round self-reflection prompt (applied after each round).}, label={lst:reflect}]
# User
You just completed round {round_idx}. Your reward this round was
{reward}. Here is your round output:

{round_output}

In 2-3 sentences, identify ONE concrete capability update (prompt
edit, memory seed, or tool-use change) that would improve your next
round. Return only the update, no apology.
\end{lstlisting}

\subsection{Meta-judge Prompt}

\begin{lstlisting}[language=prompt, caption={Rubric judge prompt.}, label={lst:judge}]
# User
You are an expert grader. Evaluate a candidate answer using the
rubric below.

Candidate Answer:
"{answer}"

Rubric (JSON checklist):
{rubric_json}

For each rubric item:
  - If operator is "correctness": does the answer satisfy/support the
    criterion? (true/false)
  - If operator is "contradiction": does the answer contradict the
    criterion? (true/false)

Return JSON only:
{"results":[true,false,...]}
\end{lstlisting}

\begin{lstlisting}[language=prompt, caption={Contribution-score judge prompt}, label={lst:meta-schema}]
{
  You are an evaluator for multi-agent collaboration. Score how much this agent output 
  can contribute to solving the task.
  Return strict JSON only: {"score": <0~1 float>, "reason": "..."}.
  Scoring rubric:
  - 0.0: irrelevant/wrong/no useful signal
  - 0.3: weak but partially relevant
  - 0.5: moderately useful evidence or decomposition
  - 0.7: strong useful contribution with concrete progress
  - 1.0: critical and directly decisive contribution

  {evidence_note}

  Task:
  {self._instance_text()}

  Round Query Given To Agent:
  {query}

  Agent ID: {agent_id}
  Agent Role: {role}

  Tools Used: {tool_names}                                                             
  Evidence Gate Precheck: {PASS_or_FAIL}                                               
  Agent Output:                                                                        
  {output_for_judge}                                                                   
}
\end{lstlisting}

\subsection{Dataset-specific Task Prompt Fragments}

The full dataset-specific templates are in
\texttt{prompts/dataset-shared/} in the released code. We reproduce
the task-instance fragment for each here.

\begin{lstlisting}[language=prompt, caption={Finance task template.}, label={lst:finance-task}]
You are a financial agent. You are given a question and you need to
answer it using the tools provided. Assume the current date is
April 07, 2025.

You will have access to a data storage system. You can use this
system to store parsed contents of HTML pages retrieved from the
web. You can then use the retrieve_information tool to answer
questions or gather information from the stored documents.

When you have the final answer, call the `submit_final_result` tool.

Question:
{question}
\end{lstlisting}

\begin{lstlisting}[language=prompt, caption={Browsecomp task template.}, label={lst:browsecomp-task}]
You are a research assistant. Use search_documents and
retrieve_document to find the answer to the question below. Answer
with the full birth name only, nothing else. Call `done(answer,
confidence_score)` to submit.

Question:
{question}
\end{lstlisting}

\begin{lstlisting}[language=prompt, caption={Plancraft task template.}, label={lst:plancraft-task}]
Determine whether the target crafting goal is achievable given the
inventory. Produce a valid minimal action sequence, or output exactly
IMPOSSIBLE. Tools: search, move, smelt, impossible.

Target: {target_item}
Inventory:
{inventory}
\end{lstlisting}

\begin{lstlisting}[language=prompt, caption={Workbench task template.}, label={lst:workbench-task}]
Execute the workplace task below. When complete, call `done`.

Task:
{question}
\end{lstlisting}

\subsection{Output JSON Schema for the Meta-LLM}

\begin{lstlisting}[language=prompt, caption={Meta-LLM output JSON schema.}, label={lst:meta-schema}]
{
  "birth_death_pairs": [
    { "v_dead": "<agent_id or null>",
      "v_new": { "role": "<role name>", "goal": "<...>", "tools": [...] } }
  ],
  "graph_edit": [
    { "op": "edge_add" | "edge_remove", "from": "<id>", "to": "<id>" }
  ],
  "graph_diff": { "nodes_added": [...], "nodes_removed": [...],
                   "edges_added": [...], "edges_removed": [...] },
  "agent_feedback": {
    "<agent_id>": { "prompt_delta": "<text>", "memory_seed": "<text>" }
  },
  "global_rationale": "<<=3 sentences>",
  "time_control": "continue" | "slow_again" | "stop"
}
\end{lstlisting}

\end{document}